\theoremstyle{plain}
\newtheorem{theorem}{Theorem}[section]
\theoremstyle{definition}
\newtheorem{definition}[theorem]{Definition}
\theoremstyle{remark}
\newtheorem{claim}[theorem]{Claim}
\newtheorem{observation}[theorem]{Observation}
\newcolumntype{P}[1]{>{\centering\arraybackslash}p{#1}}
\DeclareMathOperator*{\argmax}{argmax}
\DeclareMathOperator*{\argmin}{argmin}
\begin{document}

\title{Balancing Fairness and Accuracy in Data-Restricted Binary Classification}

\author{Zachary McBride Lazri}
\email{zlazri@umd.edu}
\orcid{0000-0002-7400-5547}
\affiliation{%
  \institution{University of Maryland}
  \city{College Park}
  \state{Maryland}
  \country{USA}
}

\author{Danial Dervovic}
\email{danial.dervovic@jpmchase.com}
\affiliation{%
  \institution{J.P. Morgan AI Research}
  \city{London}
  \country{UK}
}

\author{Antigoni Polychroniadou}
\email{antigoni.polychroniadou@jpmorgan.com}
\affiliation{%
  \institution{J.P. Morgan AI Research}
  \city{New York}
  \state{New York}
  \country{USA}
}

\author{Ivan Brugere}
\email{ivan.brugere@jpmchase.com}
\affiliation{%
  \institution{J.P. Morgan AI Research}
  \city{New York}
  \state{New York}
  \country{USA}
}

\author{Dana Dachman-Soled}
\email{danadach@umd.edu}
\affiliation{%
  \institution{University of Maryland}
  \city{College Park}
  \state{Maryland}
  \country{USA}
}

\author{Min Wu}
\email{minwu@umd.edu}
\affiliation{%
  \institution{University of Maryland}
  \city{College Park}
  \state{Maryland}
  \country{USA}
}

\renewcommand{\shortauthors}{Trovato et al.}

\begin{abstract}
  Applications that deal with sensitive information may have restrictions placed on the data available to a machine learning (ML) classifier. For example, in some applications, a classifier may not have direct access to sensitive attributes, affecting its ability to produce accurate and fair decisions. This paper proposes a framework that models the trade-off between accuracy and fairness under four practical scenarios that dictate the type of data available for analysis. Prior works examine this trade-off by analyzing the outputs of a scoring function that has been trained to implicitly learn the underlying distribution of the feature vector, class label, and sensitive attribute of a dataset. In contrast, our framework directly analyzes the behavior of the optimal Bayesian classifier on this underlying distribution by constructing a discrete approximation it from the dataset itself. This approach enables us to formulate multiple convex optimization problems, which allow us to answer the question: \textit{How is the accuracy of a Bayesian classifier affected in different data restricting scenarios when constrained to be fair?} Analysis is performed on a set of fairness definitions that include group and individual fairness. Experiments on three datasets demonstrate the utility of the proposed framework as a tool for quantifying the trade-offs among different fairness notions and their distributional dependencies. 
\end{abstract}

\begin{CCSXML}
<ccs2012>
 <concept>
  <concept_id>00000000.0000000.0000000</concept_id>
  <concept_desc>Do Not Use This Code, Generate the Correct Terms for Your Paper</concept_desc>
  <concept_significance>500</concept_significance>
 </concept>
 <concept>
  <concept_id>00000000.00000000.00000000</concept_id>
  <concept_desc>Do Not Use This Code, Generate the Correct Terms for Your Paper</concept_desc>
  <concept_significance>300</concept_significance>
 </concept>
 <concept>
  <concept_id>00000000.00000000.00000000</concept_id>
  <concept_desc>Do Not Use This Code, Generate the Correct Terms for Your Paper</concept_desc>
  <concept_significance>100</concept_significance>
 </concept>
 <concept>
  <concept_id>00000000.00000000.00000000</concept_id>
  <concept_desc>Do Not Use This Code, Generate the Correct Terms for Your Paper</concept_desc>
  <concept_significance>100</concept_significance>
 </concept>
</ccs2012>
\end{CCSXML}

\ccsdesc[500]{Social and professional topics~User characteristics}
\ccsdesc[500]{Computing methodologies~Artificial intelligence; Machine learning}

\keywords{Fairness, Fairness-Accuracy Trade-off, Linear Programming, Method of Multipliers, Convex Optimization, Vector Quantization}

\received{20 February 2007}
\received[revised]{12 March 2009}
\received[accepted]{5 June 2009}

\maketitle

\section{Introduction}
A variety of studies have found bias to exist in machine learning (ML) models~\cite{sweeney2013discrimination, angwin2016machine,LarsonCompas, buolamwini2018gender, larson2017we}, raising concerns over their use in high-stakes applications. For example, Angwin et al.~\cite{angwin2016machine} and Larson et al.~\cite{LarsonCompas} found that a tool used to calculate the risk of criminal defendants repeating a crime was biased against African Americans. To address these concerns, a variety of mathematical definitions have been constructed to quantify the fairness of such models, the most prominent of which fall under two categories—group fairness~\cite{kamiran2012data, hardt2016equality, chouldechova2017fair, zafar2017fairness,calib_fair} and individual fairness~\cite{dwork2012fairness, petersen2021post}. Group fairness definitions aim to quantify biases that may exist among the results produced for two or more demographic groups, while individual fairness focuses on ensuring the fair treatment of similar individuals. However, multiple theorems have shown the impossibility of satisfying multiple fairness definitions simultaneously. This raises the following questions: 
\begin{itemize}
    \item To what extent is it possible to simultaneously satisfy multiple definitions of fairness exactly or in some relaxed form?
    \item If it is possible to simultaneously satisfying multiple fairness definitions, how much accuracy is achievable?
\end{itemize}
Noting that the Bayesian classifier produces the smallest probability of error among all classifiers, the answer to the latter question requires determining a Bayesian classifier’s accuracy when it is required to satisfy fairness constraints. Answering the former question requires verifying that a classifier which satisfies such constraints exists. 

We should also be aware that the answers to these questions depend on the underlying distribution of features available to a classifier. In the simplest case, all features are made available. However, in many applications in which one or more features are deemed sensitive, which we refer to as sensitive attributes, restrictions are set up to limit the features available to a classifier. For example, unless certain exceptions apply, financial institutions are not allowed to request or consider an applicant's race, among other prohibited bases, when applying for loans, but must prove that their decisions are anti-discriminatory under the Fair Housing Act and Equal Credit Opportunity Act \cite{FHA,FECOA}. Similarly, the Civil Rights Act of 1964 \cite{berg1964equal} requires that public schools and institutions of higher education do not discriminate on the basis of several sensitive attributes, including race, sex, and religion. However, analyzing the equity of their decisions is not always directly possible since it is not mandatory for applicants to provide such demographic information in their applications. Such situations are captured by the definition of unawareness of a sensitive attribute proposed by Kusner et al.~\cite{kusner2017counterfactual}. In certain situations, a model is only permitted to use features that have been decorrelated with respect to the sensitive attribute to make decisions. Financial institutions, for example, commonly form separate ML and compliance teams in the same institution. Compliance teams oversee the handling of sensitive information and are responsible for ensuring that the decisions produced by a company are non-discriminatory, and ML teams train models to produce decisions for a company. While a compliance team is provided with access to all sensitive attributes, when available, an ML team is prohibited from accessing such features and should not be able to deduce them from the data~\cite{de2020cryptocredit}. In other words, the features provided to the ML team must be decorrelated with respect to the sensitive attribute, a notion introduced by Zemel et al.~\cite{zemel2013learning}.

Motivated by these circumstances, this paper analyzes the trade-off in accuracy that a Bayesian classifier must incur when it is required to satisfy multiple fairness constraints under different situations that restrict the feature distribution available to it. Obtaining the fair Bayesian classifier requires accessing the true distribution of a given set of features. Yet in practice, we do not have access to the true feature distribution; rather we are provided with a sampling from it in the form of a dataset. Thus, our objectives in this work are twofold: (1) given a dataset, we aim to faithful approximate its underlying feature distribution for analysis, and (2) given the approximated feature distribution, we aim to analyze the decisions produced by a Bayesian classifier when it is constrained to be fair. 

Unfortunately determining a closed form expression for the underlying feature distribution of a real-world dataset is impractical since it may be extremely complex. Thus, we instead propose to directly approximate a feature distribution as a discrete signal. Specifically, we use vector quantization (VQ)~\cite{gersho1991vector} to induce a cell decomposition over the feature space, with each cell representing an element on the support of our discrete approximation. The distribution over these VQ cells is obtained by accumulating the statistics of the samples that fall inside each cell. Since the number of samples in a VQ cell may be small, we densely sample a generator which has been trained to learn the underlying feature distribution of a dataset to ensure that the statistics accumulated over each cell faithfully approximate the true underlying feature distribution. This process is outlined in Fig. \ref{framework_fig}a.

In our subsequent analysis, we derive an optimization formulation to model the decisions produced by the Bayesian classifier of this discrete distribution when it is constrained to satisfy various fairness definitions. The Bayesian classifier is designed to be stochastic and thus assigns scores to different feature vectors, reflecting their probability of receiving a positive class label.  All analyzed fairness definitions can be directly encoded as convex constraints in our framework, allowing us to avoid proactively constructing objective functions that indirectly satisfy them. While commonly analyzed in isolation, we concurrently formulate constraints from individual and group fairness definitions to investigate their relationship. Our analysis is performed under four 
data-restricting situations listed in Fig. \ref{framework_fig}b., under which the sensitive attribute \textbf{is (is not)} available and the features used for classification \textbf{are (are not)} required to be decorrelated from it. We elaborate on these situations in Section \ref{prob_setup}.
Experiments conducted on three public datasets reveal that our framework captures the distributional dependence of the tensions that exist between different fairness notions and suggest that coupling individual and group fairness prevents a Bayesian classifier from arbitrarily penalizing individuals to satisfy group fairness. We also observe that a fair Bayesian classifier is typically able to maintain its accuracy on feature vectors that have been decorrelated from the sensitive attribute.

In summary, the contributions of this work are as follows:
\begin{itemize}
    \item We introduce a framework that can be used as a tool to analyze the underlying feature distribution of a dataset and determine the compatibility of different fairness notions in performing classification.
    \item We introduce a method for constructing a high quality discrete approximation of the underlying feature distribution of a dataset.
    \item We construct multiple convex optimization problems to analyze the trade-off between accuracy and individual and group fairness notions in various practical data restricting scenarios.
    \item We present experimental results to analyze our framework on three benchmark datasets.
\end{itemize}

The remainder of this paper is organized as follows. 
In Section \ref{related_work} we review the related work in analyzing the trade-off between accuracy and fairness. Section \ref{background} provides the definitions of fairness that we analyze in our trade-off analysis along with descriptions of the restrictions placed on the data distribution under each analysis scenario. In Section \ref{prob_setup}, we formulate a framework for analyzing the reduction in accuracy incurred from forcing a Bayesian classifier to be fair under four scenarios that constrain the information available to a Bayesian classifier. Experimental results are presented in Section \ref{exp_results} to quantify the trade-off between accuracy and fairness under each of these scenarios. Finally, we conclude the paper and provide discussion in Section \ref{conclusion}.

\begin{figure*}[t]
\centerline{\includegraphics[width=0.95\textwidth]{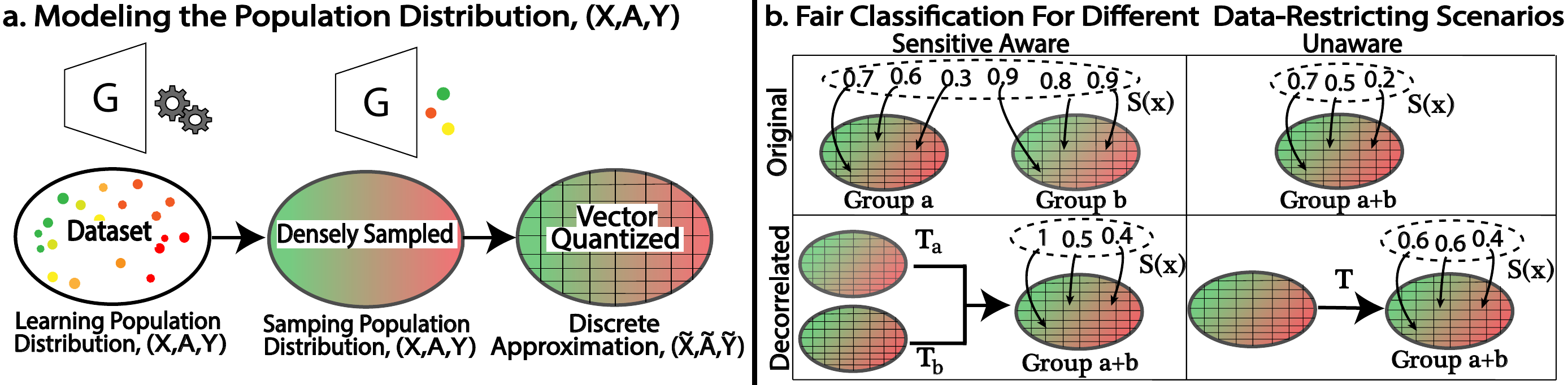}}
\caption{Outline of proposed framework for trade-off analysis. (a) A discrete approximation of the population distribution is constructed by using a generator to densely sample it and applying vector quantization. (b) The accuracy-fairness trade-off is analyzed under four data-restricting situations in which the sensitive attribute \textbf{is (is not)} available and the features used for classification \textbf{are (are not)} required to be decorrelated from it.}
\label{framework_fig}
\end{figure*}

\section{Related Work} 
\label{related_work}
Multiple works have shown the impossibility of simultaneously satisfying multiple definitions of fairness \cite{chouldechova2017fair, kleinberg2016inherent, zhao2022inherent}. 
For example, Chouldechova~\cite{chouldechova2017fair} showed that attempting to exactly satisfy three group fairness definitions simultaneously is futile, while Kleinberg et al.~\cite{kleinberg2016inherent} showed that demographic parity and equalized odds are in conflict when demographic groups have unequal class label balance. This has motivated a number of studies to analyze the trade-off between the accuracy and fairness of ML outcomes, the majority of which analyze a single fairness notion using pre-processing \cite{calmon2017optimized, kamiran2012data, luong2011k}, in-processing \cite{chen2020towards, jiang2020wasserstein, zafar2017fairness}, or post-processing ~\cite{petersen2021post, lohia2019bias} methods. Other studies have been conducted to analyze the trade-off between accuracy and multiple fairness definitions \cite{kim2020fact, liu2022accuracy,Hsu, Celis}. Kim et al.~\cite{kim2020fact} analyzed the trade-off between accuracy and fairness by constructing primarily linear constraints from the joint distribution between the class label and sensitive attribute of a dataset,
though they do not consider the correlations that the class label and sensitive attribute may share with a dataset's feature vectors in analyzing this trade-off. Liu et al.~\cite{liu2022accuracy} propose an in-processing method for analyzing the trade-off between accuracy and multiple fairness definitions, though they are required to use proxy fairness constraints to avoid solving a non-convex optimization problem. 
Hsu et al.~\cite{Hsu} analyze the trade-off between accuracy and multiple fairness definitions through post-processing the scores produced by a model. However, such an approach may provide less flexible analysis since the transformation from the space of feature vectors to the space of scores is many-to-one, which could force many non-similar feature vectors to be treated similarly. Overall, none of these works incorporate both individual and group fairness notions into their analysis, nor do they directly analyze how the restrictions on the information available to a model may affect their analyses. Like our work, Menon et al. ~\cite{menon2018cost} also study the trade-off between accuracy and fairness for a Bayesian classifier. However, their study is purely theoretic and focuses only on two group fairness notions. Conversely, our framework can be used as a practical tool to uncover the tensions among a variety of different fairness definitions that exist in real datasets.

\section{Preliminaries for Trade-off Analysis}
\label{background}
The following setup is used to explicitly formulate each definition analyzed in our framework. Let $X, A, \text{ and } Y$ respectively represent the random feature vector, sensitive attribute, and class label associated with the sample spaces $\mathcal{X}, \mathcal{A}, \text{ and } \mathcal{Y}$. For simplicity, assume that $\mathcal{X}=\mathbb{R}^k, \text{ } \mathcal{A}=\{a,b\}, \text{ and } \mathcal{Y}=\{0,1\}$, though our formulations may be generalized to non-binary sensitive attributes through combinatorial extension. Let $S:\mathcal{X}\rightarrow[0,1]$ be a scoring function whose output represents the conditional probability with which we assign a feature vector a label of 1, which we refer to as a score. Then, its associated randomized estimator is:
\begin{equation}
    \hat{Y}(\mathbf{x})=\begin{cases}
        1&\text{, w.p. } S(\mathbf{x})\\
        0&\text{, w.p. } 1-S(\mathbf{x})
    \end{cases}\notag.
\end{equation}
\bigskip

\noindent
\underline{\textbf{Fairness Definitions}}

Different fairness definitions proposed in the literature provide tools for ensuring that ML models uphold various societal values. 
While Demographic Parity \cite{kamiran2012data} aims to ensure class label balance across demographic groups, Equalized Odds \cite{hardt2016equality} focuses on ensuring that ML models are equally effective in classifying different demographic groups. On the other hand, definitions of individual fairness~\cite{dwork2012fairness,petersen2021post} focus on ensuring that individuals are neither marginalized nor preferentially treated. 
In our framework, we analyze the following set of fairness definitions: 

\begin{definition}
    Demographic Parity \cite{kamiran2012data}. An estimator, $\hat{Y}$, satisfies demographic parity for a binary sensitive attribute, $A\in\{a,b\}$, if it maintains equal positive prediction rates for each value of the sensitive attribute:
    \begin{equation}
        P(\hat{Y}(X)=1|A=a) = P(\hat{Y}(X)=1|A=b).\notag
    \end{equation}
\end{definition}
\begin{definition}
    Equal Accuracy \cite{zafar2017fairness}. An estimator, $\hat{Y}$, satisfies equal accuracy for a binary sensitive attribute, $A\in\{a,b\}$, if its accuracy for different values of the sensitive attribute is equal:
    \begin{equation}
        P(\hat{Y}(X)=Y|A=a) = P(\hat{Y}(X)=Y|A=b).\notag
    \end{equation}
\end{definition}
\begin{definition}
    Equal Opportunity \cite{hardt2016equality}. An estimator, $\hat{Y}$, satisfies equal opportunity for a binary feature, $A\in\{a,b\}$, if the true positive rate for each value of the sensitive attribute is equal:
    \begin{equation}
        P(\hat{Y}(X)=1|A=a,Y=1) = P(\hat{Y}(X)=1|A=b,Y=1). \notag
    \end{equation}
\end{definition}
\begin{definition}
    Predictive Equality \cite{corbett2017algorithmic}. An estimator, $\hat{Y}$, satisfies predictive equality for a binary feature, $A\in\{a,b\}$, if the true negative rate for each value of the sensitive attribute is equal:
    \begin{equation}
        P(\hat{Y}(X)=1|A=a,Y=0) = P(\hat{Y}(X)=1|A=b,Y=0). \notag
    \end{equation}
\end{definition}
\begin{definition}
    Equalized Odds \cite{hardt2016equality}. An estimator, $\hat{Y}$, satisfies equalized odds if both equal opportunity and predictive equality are satisfied.
\end{definition}
\begin{definition}
    Local Individual Fairness \cite{petersen2021post}. A scoring function, $S$, is locally individually fair if individuals with similar features according to a distance measure, $d_{\mathcal{X}}$, receive similar scores according to a distance measure, $d_{\mathcal{Y}}$. That is, for $L\geq0$,\\
    \begin{equation}
        \mathbb{E}_{\mathbf{x}_i\sim P_X}\left[\limsup_{\mathbf{x}_i:d_{\mathcal{X}}(\mathbf{x}_i,\mathbf{x}_j)\downarrow 0}\frac{d_{\mathcal{Y}}(S(\mathbf{x}_i),S(\mathbf{x}_j))}{d_{\mathcal{X}}(\mathbf{x}_i,\mathbf{x}_j)}\right]\leq L\leq\infty.\notag
    \end{equation}
\end{definition}
Local Individual Fairness is similar to the standard notion of Individual Fairness introduced by Dwork et al.\cite{dwork2012fairness} with the main difference being that it only places similarity constraints on pairs of \textit{similar} individuals, not \textit{all} pairs of individuals. Thus, Local Individual Fairness still preserves the philosophy of treating similar individuals similarly, but it also requires satisfying fewer constraints than the standard Individual Fairness definition. This latter point is critical since the standard definition of Individual Fairness produces $\mathcal{O}(N^2)$ constraints, which leads to the poor scalability of optimization problems that aim to satisfy it. 
Specifically, Local Individual Fairness allows us to construct the following set of constraints:
\begin{align}
    &e^{-\theta d_{\mathcal{X}}^2(\mathbf{x}_i,\mathbf{x}_j)}|s^F[i]-s^F[j]|\leq \epsilon_{IF}, \ \ \ \ \ \ \forall i,j \ \ \text{  s.t. } \ \  d_{\mathcal{X}}(\mathbf{x}_i,\mathbf{x}_j)\leq \eta,\notag
\end{align}
Here, only feature vectors falling within an $\eta$-neighborhood of each other are required to receive similar scores. For small to moderate sizes of $\eta$ this set of constraints is much smaller than the set produced by the standard Individual Fairness definition. \bigskip

\noindent
\underline{\textbf{Data-Restricting Definitions}}

A model's ability to balance accuracy and fairness depends on the data available to it. We provide two practical definitions that dictate the data available to a model, which we use to create the four scenarios under which we perform our analyses.
\begin{definition}
    Unawareness of Sensitive Attribute \cite{kusner2017counterfactual}. The sensitive attribute is not a feature in the space of feature vectors.
\end{definition}
\begin{definition}
    \label{decorrelation_def}
    Decorrelation with the Sensitive Attribute \cite{zemel2013learning}. The space of feature vectors is decorrelated with the sensitive attribute. That is, the following property is satisfied: 
    \begin{equation}
        P(X|A=a) = P(X|A=b). \notag
    \end{equation}
\end{definition}

\section{Framework for Analysis}
\label{prob_setup}

In this section, we present the framework used to analyze the trade-off between accuracy and fairness under four scenarios. An illustration of the first stage of our framework is provided in Fig.\ref{framework_fig}a. 
We construct a discrete approximation, $(\tilde{X},\tilde{A},\tilde{Y})$, of $(X,A,Y)$ by first using a generator, $G$, that has learned to produce samples from the distribution, $(X,A,Y)$. We then partition the feature space into $N$ non-intersecting cells, $\{C_i\}_{i=1}^{N}$, that cover $\mathcal{X}$ using vector quantization (VQ) 
~\cite{linde1980algorithm,lloyd1982least}. The support of $\tilde{X}$ is given by  $\{\mathbf{x}_i |\mathbf{x}_i\in \mathcal{X}\}_{i=1}^{N}$, where $\mathbf{x}_i$ represent the centroid of $C_i$. We infer the population statistics of a given cell from the samples inside of it. By densely sampling $G$, we ensure that $(X\in C_j,A,Y) \approx (\tilde{X}=\mathbf{x}_j,\tilde{A},\tilde{Y})$ (see Appendix \ref{Model} for details). For notational simplicity, we use $(X,A,Y)$ in place of $(\tilde{X}, \tilde{A},\tilde{Y})$ in the remainder of this paper.

An illustration of the different scenarios under which we analyze the fairness-accuracy trade-off is provided in Fig.\ref{framework_fig}b. The first situation is the unconstrained situation in which we may make direct use of the sensitive attribute to separately assign scores to the feature vectors of different groups (see Appendix \ref{sec_fair_aware} for this formulation). The second situation is formulated in Section \ref{fairness}, under which the sensitive attribute is unavailable, meaning that a Bayesian classifier must assign the same score to individuals from different groups with the same feature vectors. When the sensitive attribute is required to be decorrelated with the feature vectors used for classification, an added layer of processing is required to decorrelate the feature vectors from the sensitive attribute prior to providing them to a Bayesian classifier. In the third situation, access to the sensitive attribute allows us to construct two separate mappings 
that redistribute the feature vectors associated with each group to achieve this goal (see Appendix \ref{sec_priv_aware} for this formulation). In the fourth situation, the sensitive attribute is unavailable, meaning that a single mapping
must be applied to the features of both groups to achieve this goal. This situation is formulated in Section \ref{fair_priv}.


\paragraph{Consolidating Notation.}
Here, we introduce matrix--vector notation which is used in the ensuing sections for modeling purposes. 
Bold face capital letters represent matrices, e.g. $\mathbf{X}$, where the value of the $i^{th}$ row and $j^{th}$ column is given by $\mathbf{X}[i,j]$. Bold face lower case letters represent column vectors, e.g. $\mathbf{x}$, where the $i^{th}$ element is given by $\mathbf{x}[i]$. The transpose of a matrix or vector is denoted with superscript $T$, e.g. $\mathbf{X}^T$. $\mathbf{p}$ is used to capture a joint distribution with $X$ and other variables. Since $\mathcal{A}=\{a,b\}$, superscripts (subscripts) on $\mathbf{p}$ containing letters refer to joint (conditional) distributions with the sensitive attribute. Similarly, since $\mathcal{Y}=\{0,1\}$, superscripts (subscripts) 
 on $\mathbf{p}$ containing numbers refer to joint (conditional) distributions with the class label.  
The following examples illustrate this notation. The $i^{th}$ elements of the vectors $\mathbf{p}_{a}, \text{ } \mathbf{p}_{0}, \text{ and } \mathbf{p}_0^a$ are equal to $P(X=\mathbf{x}_i|A=a), \text{ } P(X=\mathbf{x}_i|Y=0), \text{ and } P(X=\mathbf{x}_i,A=a|Y=0)$, respectively. We use $\mathbf{1}_{N}$ and $\mathbf{0}_{N}$ to represent column vectors of length $N$, containing all $1$s and all $0$s, respectively. $\mathbf{I}_{N,N}$ represents an $N\times N$ identity matrix, while $\mathbf{0}_{M,N}$ and $\mathbf{1}_{M,N}$ represent  $M\times N$ matrices of all zeros and ones, respectively. Finally, we use the superscripts $*$ and $(F)$ to distinguish between variables associated with the unconstrained and fair Bayesian classifiers, respectively. A table containing all notation introduced in this paper is provided in Appendix~\ref{notation_summary}.

\subsection{Fairness-Accuracy Trade-off}
\label{fairness}
In this section, we derive an optimization problem that obtains the scoring function of the fair Bayesian classifier. Since we analyze discrete distributions, this scoring function is characterized by the scoring vector, $\mathbf{s}^{(F)}\in [0,1]^{N}$, in which each element provides the probability that the fair Bayesian classifier assigns a class label of 1 to a feature vector sampled from a given VQ cell. We solve for $\mathbf{s}^{(F)}$ by first finding the unconstrained Bayesian classifier's scoring vector, $\mathbf{s}^{*}\in [0,1]^{N}$, and determining how to optimally deviate the fair Bayesian classifiers scores from it so that we minimize its accuracy drop off while satisfying the necessary fairness constraints.

Given access to the joint distribution $(X,Y)$, a Bayesian classifier produces the most accurate solution by taking the majority vote over the support of $X$. This means its associated scoring vector is binary and is given by the following solution:
\begin{equation}
    \mathbf{s}^{*}[i] = \argmax_{y}\mathbf{p}^{y}[i], \forall i.
\end{equation}
Hence, it is a special case of a randomized classifier that produces deterministic outputs, and its accuracy in terms of the probability of correct prediction is given by $Acc^*=\sum_{i=1}^{N}\mathbf{p}^{\mathbf{s}^{*}[i]}[i]$.


We optimize for $\mathbf{s}^{(F)}$ indirectly by solving for a vector, $\mathbf{m} = \mathbf{s}^{*}-\mathbf{s}^{(F)}$, that represents the deviation between $\mathbf{s}^{(F)}$ and  $\mathbf{s}^{*}$. Given $\mathbf{m}$, the reduction in accuracy incurred by deviating $\mathbf{s}^{(F)}$'s scores from $\mathbf{s}^{*}$'s scores is $(\mathbf{p}^{1}-\mathbf{p}^{0})^T\mathbf{m}$, which we aim to minimize while satisfying the necessary fairness constraints. Since satisfying a particular notion of fairness exactly may be too strict for a variety of applications, we formulate all fairness constraints as inequalities and provide limits on the degree to which the fair classifier may deviate from exactly satisfying a particular notion of fairness. The list of constraints for each group fairness notion is provided below.
\begin{equation}
    \begin{split}
        |(\mathbf{p}_{a}-\mathbf{p}_b)^T(\mathbf{s}^{*}-\mathbf{m})|\leq \epsilon_{DP} \ \ \ \ \ \ \ \ \ (DP)\\
        |(\mathbf{p}_{a,0}-\mathbf{p}_{b,0})^T(\mathbf{s}^{*}-\mathbf{m})|\leq \epsilon_{EOp} \ \ \ (PE) \\
        |(\mathbf{p}_{a,1}-\mathbf{p}_{b,1})^T(\mathbf{s}^{*}-\mathbf{m})|\leq \epsilon_{PE} \ \ \ (EOp)\\
    \end{split}
\quad \quad \quad
    \begin{split}
        \epsilon_{EOp} = \epsilon_{PE} \ \ \ \ \ \ \ \ \ \ \ \ \ \ \ \ \ \ \ \ \ \ \ \ \ \ \ \ \ \ \ \ \ \ \ \ (EOd)\\
        |(\mathbf{p}_{a}^{0}-\mathbf{p}_{b}^{0})^T(\mathbf{1}_{N}-\mathbf{s}^{*}+\mathbf{m})\ \ \ \ \ \ \ \ \ \ \ \ \ \ \ \ \ \ \ \ \ \ \ \\
        +(\mathbf{p}_{a}^{1}-\mathbf{p}_{b}^{1})^T(\mathbf{s}^{*}-\mathbf{m})| \leq \epsilon_{EA} \ \ \ (EA)
    \end{split}
\end{equation} 
The set of local individual fairness constraints 
can be formulated as follows:
\begin{align}
    |\mathbf{W}(\mathbf{s}^{*}-\mathbf{m})|\leq \epsilon_{IF}\mathbf{1}_{B}, \ \ \ (Ind)
\end{align}
where $\mathbf{W}$ is a ${B\times N}$ matrix is a matrix in which the number of rows is equal to the total number of feature vector pairs on the support of $X$ within an $\eta$-neighborhood of each other. In particular, the each row of  $\mathbf{W}$ contains non-zero entries, $e^{-\theta d^2_{\mathcal{X}}(\mathbf{x}_i,\mathbf{x}_j)}$ and $-e^{-\theta d^2_{\mathcal{X}}(\mathbf{x}_i,\mathbf{x}_j)}$ in only two indices, $i$ and $j$, respectively, for which $d_{\mathcal{X}}(\mathbf{x}_i,\mathbf{x}_j)\leq \eta$. 

Letting $\mathcal{I}^+$ and $\mathcal{I}^-$, be the set of indices for which the values of $\mathbf{s}^{*}$ are 1 and 0, respectively, the final optimization problem constructed to analyze the fairness-accuracy trade-off is given as follows:
\begin{align}
    \begin{split}
         \ \ \ \ \ \ \ \ \ \ \ \ \ \ \ \min_{\mathbf{m}}(\mathbf{p}^{1}-\mathbf{p}^{0})^T\mathbf{m},
    \end{split}
    \quad
    \begin{split}
        \text{ s.t. } \ \ \ \ \ \ \ \ \ \
    \end{split}
    \quad
    \begin{split}
        & (3) \text{ and } (4) \text{ are satisfied} \\
        &0\leq \mathbf{m}[i]\leq 1,  \ \ \ \ \ i \in \mathcal{I}^+\\
        -&1\leq \mathbf{m}[i]\leq 0, \ \ \ \ i \in \mathcal{I}^-
    \end{split} \ \ \ \  \ \ \ \ \ \ \ \
\end{align}
Observing that each of the constraints in this minimization problem can be made linear in $\mathbf{m}$, this optimization problem is convex and can be efficiently solved using linear programming \cite{dantzig1963linear}. Thus, $\mathbf{s}^{(F)}=\mathbf{s}^{*}-\mathbf{m}$ and the accuracy of the fair Bayesian classifier is given by $Acc^{(F)}= Acc^*-(\mathbf{p}^{1}-\mathbf{p}^{0})^T\mathbf{m}$. See Appendix \ref{app_LP_const} for the explicit derivation of minimization problem (4).

\subsection{Transfer Fairness to Decorrelated Domain}
\label{fair_priv}
Definition~\ref{decorrelation_def} measures differences in the distribution of the feature vectors for different groups. Unless the original space of feature vectors is decorrelated with respect to the sensitive attribute, a transformation must be applied to $X$ to satisfy this definition. 
Moreover, we require such a transformation to produce feature vectors which the original fair Bayesian classifier  still fairly classifies. 
Since $X$ is discrete in our framework, this transformation comes in the form of a matrix, $\mathbf{T}$, where $\mathbf{T}[i,j]$ represents the probability of mapping a sample from the $j^{th}$ to the $i^{th}$ VQ cell. As a result, we require that $\mathbf{T}\in\mathcal{P}$, where $\mathcal{P}$ represents the set of matrices whose columns are probability mass functions. A constraint on decorrelation can be easily constructed by minimizing the value of $\|\mathbf{T}(\mathbf{p_a}-\mathbf{p_b})\|_1$.
Hence, the following optimization problem is used to achieve our goal.
\begin{align}
    \min_{\mathbf{T}\in\mathcal{P}} 
    &-\lambda\underbrace{(\mathbf{s}^{(F)T}\mathbf{T}\mathbf{p}^{1} + (\mathbf{1}_{N}-\mathbf{s}^{(F)})^T\mathbf{T}\mathbf{p}^{0})}_{Accuracy} + \beta\underbrace{\|\mathbf{T}(\mathbf{p}_a-\mathbf{p}_b)\|_1}_{Correlation}\notag\\
    & \text{s.t. } \ \ \ \ \ \ \ \ \ \ 
    |f(\mathbf{T})|\leq \mathbf{f} \ \ \ \  \textrm{\it Fairness}
\end{align}
The first term represents the negative of the accuracy achieved by applying the fair Bayesian classifier to the transformed space of feature vectors. Thus, minimizing this term will maximize that fair Bayesian classifier's accuracy on the transformed distribution of features.
The second term measures the correlation between the distribution of feature vectors and the sensitive attribute according to Definition~\ref{decorrelation_def}. Thus, minimizing this term encourages $\mathbf{T}$ to decorrelate the distribution of feature vectors from the sensitive attribute. It can take on a value between $0$ (complete decorrelation) and $2$ (complete correlation). 
The $Fairness$ constraint directly ensures that the fairness constraints from equations $(2)$ and $(3)$ are preserved after the transformation has been applied. 
$f(\mathbf{T})\in\mathbb{R}^{(B+4)\times 1}$ is given by the following equation: 
\begin{equation}
    f(\mathbf{T}) =
    \underbrace{\begin{bmatrix}(\mathbf{p_a}-\mathbf{p_b})^T & \mathbf{0}_{1,N}\\ (\mathbf{p_{a,1}}-\mathbf{p_{b,1}})^T & \mathbf{0}_{1,N} \\ \mathbf{0}_{1,N} & (\mathbf{p_{a,0}}-\mathbf{p_{b,0}})^T \\ (\mathbf{p_a^{1}}-\mathbf{p_b^{1}})^T
     & (\mathbf{p_a^{0}}-\mathbf{p_b^{0}})^T\\
     \mathbf{W} & \mathbf{0}_{B,N}
    \end{bmatrix}}_{\mathbf{P}}
    \underbrace{\begin{bmatrix}\mathbf{T} & \mathbf{0}_{N,N}\\
    \mathbf{0}_{N,N} & \mathbf{T}
    \end{bmatrix}}_{\mathbf{\tilde{T}}}
    \underbrace{\begin{bmatrix} \mathbf{s}^{(F)} \\ \mathbf{1}_{N}-\mathbf{s}^{(F)}
    \end{bmatrix} }_{\mathbf{\tilde{s}}^{(F)}},
\end{equation}
where $\mathbf{\tilde{T}}$ can be directly written as a function of $\mathbf{T}$:
\begin{equation}
    \mathbf{\tilde{T}} = \underbrace{\begin{bmatrix}\mathbf{1}_{N,N} &  \mathbf{0}_{N,N}\\ \mathbf{0}_{N,N} & \mathbf{1}_{N,N}\end{bmatrix}}_{\mathbf{M}}\circ \Biggl( \underbrace{\begin{bmatrix} \mathbf{I}_{N,N} \\ \mathbf{I}_{N,N} \end{bmatrix}}_{\mathbf{\tilde{I}}} \mathbf{T} \underbrace{\begin{bmatrix} \mathbf{I}_{N,N} & \mathbf{I}_{N,N}\end{bmatrix}}_{\mathbf{\tilde{I}}^{T}}\Biggl).
\end{equation}
The first four elements of $|f(\mathbf{T})|$ capture the degree to which a particular group fairness notion is violated in the space of decorrelated feature vectors, while the remaining elements capture the degree to which a pair of neighboring feature vectors from the input space violate local individual fairness after the decorrelation transformation has been applied (see Appendix \ref{trans_deriv} for derivation).
Thus, setting $\mathbf{f}=[\epsilon_{DP}, \ \epsilon_{PE}, \ \epsilon_{EOp}, \ \epsilon_{EOd}, \ \mathbf{1}_{B}^T\epsilon_{IF}]^T$ preserves the group and individual fairness constraints $(2)$ and $(3)$. 

Note that the $Fairness$ constraint can be reformulated as an equality constraint given by $\max(\tilde{f}(\mathbf{T})-\mathbf{\tilde{f}}, \mathbf{0}_{2B+8})=\mathbf{0}_{2B+8}$, where 
\begin{align}
\tilde{f}(\mathbf{T}) = \begin{bmatrix}-\mathbf{P} \\ \mathbf{P}\end{bmatrix} \mathbf{\tilde{T}}\mathbf{\mathbf{\tilde{s}}}^{(F)}
\ \ \ \ \ \ \ 
\text{and}
\ \ \ \ \ \ \ 
\mathbf{\tilde{f}} = \begin{bmatrix}\mathbf{f} \\ \mathbf{f}\end{bmatrix}.
\end{align}
Thus, the question becomes: Given that we must satisfy individual and group fairness constraints associated with the original fair Bayesian classifier's scoring vector, $\mathbf{s}^{(F)}$, how well can we accurately decorrelate the space of feature vectors from the sensitive attribute? To solve this problem, we form the Augmented Lagrangian: 
\begin{align}
\max_{\bm{\rho}}\min_{\mathbf{T}\in\mathcal{P}} 
    &-\lambda(\mathbf{s}^{(F)T}\mathbf{T}\mathbf{p}^{1} + (\mathbf{1}_{N}-\mathbf{s}^{(F)})^T\mathbf{T}\mathbf{p}^{0}) \notag\\
    &+ \beta\|\mathbf{T}(\mathbf{p}_a-\mathbf{p}_b)\|_1 \notag \\
    & + \langle\bm{\rho},\max(\tilde{f}(\mathbf{T})-\mathbf{\tilde{f}}, \mathbf{0}_{2B+8})\rangle \notag \\
    & + \frac{\tau}{2}\|\max(\tilde{f}(\mathbf{T})-\mathbf{\tilde{f}}, \mathbf{0}_{2B+8})\|_2^2 .
\end{align}

\label{results}
\begin{figure*}[t]
\centerline{\includegraphics[width=0.95\textwidth]{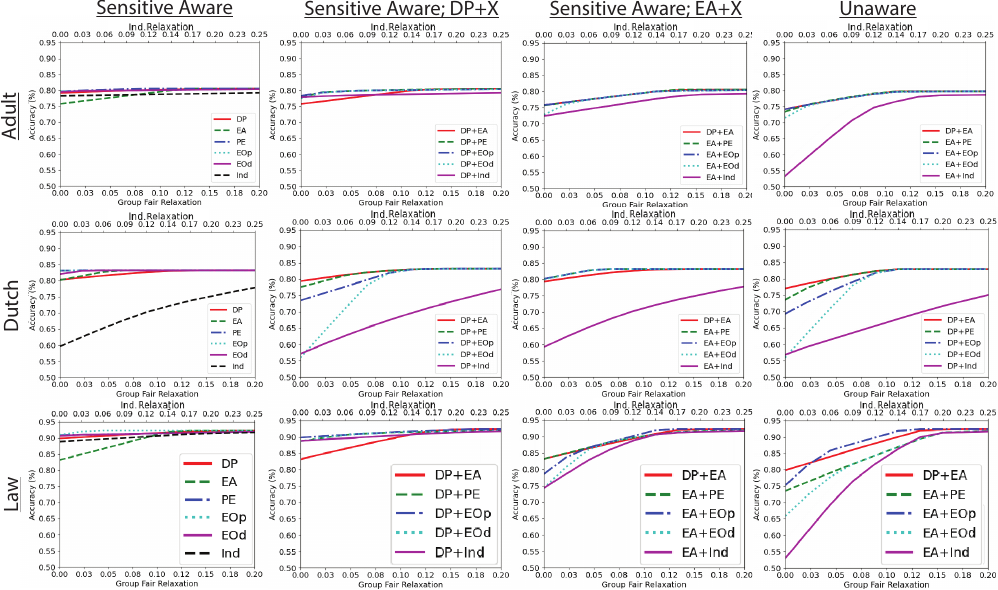}}
\caption{Pareto frontiers capturing the accuracy-fairness trade-off for three datesets under awareness and unawareness of the sensitive attribute. Each plot provides curves for different pairings of fairness constraints; namely, DP, EA, PE, EOd, and Ind.}

\label{Acc_fair_trade}
\end{figure*}

Solving this minimization problem is equivalent to solving minimization problem $(5)$. Thus, if minimization problem $(9)$ is convex, then we can solve it exactly, providing us with the solution to minimization problem $(5)$. Hence, we state the following claim before providing a solution to this problem.
\begin{claim} 
Minimization problem $(9)$ is convex.
\end{claim}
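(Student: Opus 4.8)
The plan is to verify the two standard ingredients of a convex program for the inner minimization in $(9)$: convexity of the feasible set, and convexity of the objective as a function of the optimization variable $\mathbf{T}$ for each fixed multiplier vector $\bm{\rho}\geq\mathbf{0}$ (the regime relevant to the method of multipliers). The feasible set $\mathcal{P}$ is the set of column-stochastic matrices, cut out by the linear equalities $\mathbf{1}_N^T\mathbf{T}=\mathbf{1}_N^T$ together with the linear inequalities $\mathbf{T}[i,j]\geq 0$; hence $\mathcal{P}$ is a polytope and in particular convex. It then remains to show the objective is convex in $\mathbf{T}$, which I would do term by term.

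The accuracy term $-\lambda\bigl(\mathbf{s}^{(F)T}\mathbf{T}\mathbf{p}^{1}+(\mathbf{1}_N-\mathbf{s}^{(F)})^T\mathbf{T}\mathbf{p}^{0}\bigr)$ is linear in the entries of $\mathbf{T}$, since $\mathbf{s}^{(F)}$, $\mathbf{p}^{0}$, $\mathbf{p}^{1}$ and $\lambda$ are fixed. The correlation term $\beta\|\mathbf{T}(\mathbf{p}_a-\mathbf{p}_b)\|_1$ is the composition of the convex $\ell_1$ norm with the linear map $\mathbf{T}\mapsto\mathbf{T}(\mathbf{p}_a-\mathbf{p}_b)$ (the vector $\mathbf{p}_a-\mathbf{p}_b$ being fixed), scaled by $\beta\geq 0$, hence convex.

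The crux is the two penalty terms, and the key observation is that $\tilde{f}(\mathbf{T})$ is an affine—indeed linear—function of $\mathbf{T}$. To see this I would unwind $(6)$–$(8)$: by $(7)$, $\mathbf{\tilde{T}}=\mathbf{M}\circ(\mathbf{\tilde{I}}\,\mathbf{T}\,\mathbf{\tilde{I}}^T)$ with $\mathbf{M}$ and $\mathbf{\tilde{I}}$ constant, and both the triple matrix product $\mathbf{\tilde{I}}\mathbf{T}\mathbf{\tilde{I}}^T$ and the Hadamard product against the fixed $\mathbf{M}$ are linear in $\mathbf{T}$; then by $(8)$ and $(6)$, $\tilde{f}(\mathbf{T})=\begin{bmatrix}-\mathbf{P}\\\mathbf{P}\end{bmatrix}\mathbf{\tilde{T}}\,\mathbf{\tilde{s}}^{(F)}$ is linear in $\mathbf{\tilde{T}}$ because $\mathbf{P}$ and $\mathbf{\tilde{s}}^{(F)}$ are constant, hence linear in $\mathbf{T}$. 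Consequently each coordinate of $\tilde{f}(\mathbf{T})-\mathbf{\tilde{f}}$ is an affine scalar function $g_i(\mathbf{T})$. Since $t\mapsto\max(t,0)$ and $t\mapsto\max(t,0)^2$ are each convex and nondecreasing on $\mathbb{R}$, every $\max(g_i(\mathbf{T}),0)$ and every $\max(g_i(\mathbf{T}),0)^2$ is convex in $\mathbf{T}$ (a convex nondecreasing function composed with an affine map), and the former is moreover nonnegative. Therefore $\langle\bm{\rho},\max(\tilde{f}(\mathbf{T})-\mathbf{\tilde{f}},\mathbf{0})\rangle=\sum_i\rho_i\max(g_i(\mathbf{T}),0)$ is a nonnegative linear combination of convex functions when $\bm{\rho}\geq\mathbf{0}$, hence convex, and $\tfrac{\tau}{2}\|\max(\tilde{f}(\mathbf{T})-\mathbf{\tilde{f}},\mathbf{0})\|_2^2=\tfrac{\tau}{2}\sum_i\max(g_i(\mathbf{T}),0)^2$ is convex because $\tau>0$. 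Summing the four contributions, the objective of $(9)$ is convex in $\mathbf{T}$ over the convex set $\mathcal{P}$, so the inner problem is a convex program.

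The step I expect to demand the most care is verifying linearity of $\tilde{f}$ in $\mathbf{T}$: one must chase the block, Hadamard, and identity-embedding structure in $(6)$–$(8)$ and confirm that $\mathbf{\tilde{s}}^{(F)}$ is a genuine constant—built from the already-fixed scoring vector $\mathbf{s}^{(F)}$, not the current variable—so that no term bilinear in $\mathbf{T}$ appears. A secondary point worth stating explicitly is why $\bm{\rho}$ may be assumed nonnegative, since otherwise the linear penalty term need not be convex; this follows from the multiplier recursion $\bm{\rho}\leftarrow\bm{\rho}+\tau\max(\tilde{f}(\mathbf{T})-\mathbf{\tilde{f}},\mathbf{0})$ preserving $\bm{\rho}\geq\mathbf{0}$ from the natural initialization $\bm{\rho}=\mathbf{0}$, because its increment is always nonnegative.
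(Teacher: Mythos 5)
Your proof is correct and follows essentially the same term-by-term route as the paper's: linearity of the accuracy term, convexity of a norm composed with an affine map for the correlation term, and convexity of the elementwise $\max$ of the affine function $\tilde{f}(\mathbf{T})-\mathbf{\tilde{f}}$ for the two penalty terms (you invoke composition rules where the paper verifies the definition of convexity directly, a purely stylistic difference). The one place you go beyond the paper is in flagging that the inner-product penalty $\langle\bm{\rho},\max(\tilde{f}(\mathbf{T})-\mathbf{\tilde{f}},\mathbf{0}_{2B+8})\rangle$ is convex only when $\bm{\rho}\geq\mathbf{0}$ and justifying this via the multiplier update preserving nonnegativity---a hypothesis the paper's proof uses implicitly when it calls this term a ``linear combination of convex functions'' without stating that the coefficients are nonnegative.
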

\begin{proof} 
See Appendix \ref{claim1}.
\end{proof}
Let $l(\mathbf{T},\bm{\rho})$ represent the objective function in minimization problem $(9)$. Since this problem is convex, we are able to solve for $\mathbf{T}$ by applying the method of multipliers~\cite{boyd2004convex}. That is, the following updates are applied until convergence where superscript $k$ indicates the value of a variable after the $k^{th}$ iteration:
\begin{align} 
\mathbf{T}^{k+1}&=\argmin_{\mathbf{T}\in\mathcal{P}}l(\mathbf{T},\bm{\rho}^{k}) \notag \\
\bm{\rho}^{k+1}&=\bm{\rho}^{k}+\tau \max(\tilde{f}(\mathbf{T}^{k+1})-\mathbf{\tilde{f}}, \mathbf{0}_{2B+8}).
\end{align}
Optimizing for $\mathbf{T}^{k+1}$ in each iteration can be done using a projected subgradient method, where the projection of $\mathbf{T}$ onto $\mathcal{P}$ is performed by projecting each column of $\mathbf{T}$ onto the unit simplex \cite{duchi2008efficient}. This problem can be generalized to the situation in which the sensitive attribute is allowed to be used to perform the transformation prior to being redacted for classification. 
In this scenario, we solve for two transformation matrices---one for each group. This problem is still convex and can be solved using the alternating direction method of multipliers algorithm \cite{boyd2004convex}; see Appendix \ref{sec_priv_aware} for details. 

\section{Experimental Results}
\label{exp_results}
In this section we experimentally investigate the different modules described in our framework on the Adult \cite{kohavi1996scaling}, Law \cite{wightman1998lsac}, and Dutch Census \cite{van20002001} datasets. Our goals are threefold: (1)~verify the validity of the data created by the generator used to approximate the population distribution of the analyzed datasets, (2)~analyze the trade-off between fairness and accuracy when decorrelation is not a requirement, and (3)~analyze the trade-off between fairness and accuracy when decorrelation is a requirement. The second and third goal are explored under awareness and unawareness of the sensitive attribute. For more details related to our experimental setups, see Appendix \ref{exp_details}.
For details on the time complexities associated with our experiments, see Appendix~\ref{VQ_time}.

\subsection{Verifying the Fidelity of Generator}
\label{generator_fidel}
\begin{table}[h]
    \centering
    \captionsetup{width=\textwidth}
        \caption{PCC and TV Distances between distributions constructed from real and generated data for three datasets. All p-values are below 0.001.}
        \begin{tabular}{cccc|ccc}
            \hline
            & \multicolumn{3}{|c}{PCC $(\uparrow)$} & \multicolumn{3}{|c}{TV Distance $(\downarrow)$}\\
            \hline
            \multicolumn{1}{c}{Distribution} & \multicolumn{1}{|c}{Adult} & \multicolumn{1}{c}{Law} & \multicolumn{1}{c}{Dutch} & \multicolumn{1}{|c}{Adult} & \multicolumn{1}{c}{Law} & \multicolumn{1}{c}{Dutch} \\ 
            \hline
             \multicolumn{1}{c}{$p_{X,A=a,Y=0}(x)$} & \multicolumn{1}{|c}{0.96} & \multicolumn{1}{l}{0.98} & \multicolumn{1}{c}{0.94} & \multicolumn{1}{|c}{0.04} & \multicolumn{1}{l}{0.01} & \multicolumn{1}{c}{0.05}\\
            \multicolumn{1}{c}{$p_{X,A=a,Y=1}(x)$} & \multicolumn{1}{|c}{0.98} & \multicolumn{1}{c}{0.88}& \multicolumn{1}{c}{0.88} & \multicolumn{1}{|c}{0.00} & \multicolumn{1}{c}{0.02}& \multicolumn{1}{c}{0.03}\\
            \multicolumn{1}{c}{$p_{X,A=b,Y=0}(x)$} & \multicolumn{1}{|c}{0.97} & \multicolumn{1}{c}{0.97}& \multicolumn{1}{c}{0.97} & \multicolumn{1}{|c}{0.05} & \multicolumn{1}{c}{0.01}& \multicolumn{1}{c}{0.02}\\
            \multicolumn{1}{c}{$p_{X,A=b,Y=1}(x)$} & \multicolumn{1}{|c}{0.97} & \multicolumn{1}{c}{0.91}& \multicolumn{1}{c}{0.95} & \multicolumn{1}{|c}{0.03} & \multicolumn{1}{c}{0.06}& \multicolumn{1}{c}{0.05}\\
            \hline
        \end{tabular}
    \label{tab_Pear}
\end{table}
The Adult, Law, and Dutch Census datasets respectively contain $48842$, $20798$, and $60420$ samples. Using CT-GAN \cite{xu2019modeling}, we train a generator to learn the underlying distribution of each dataset and use it to produce one million samples for each. Lloyd's algorithm \cite{linde1980algorithm} can then be applied to these samples to construct a discrete approximation of the population distribution. To verify the fidelity of these samples, we use the Pearson Correlation Coefficient (PCC) and Total Variation (TV) Distance to compare the following discrete distributions constructed from the true and generated samples from each dataset: $p_{X,A=a,Y=0}(x), p_{X,A=a,Y=1}(x), p_{X,A=b,Y=0}(x), p_{X,A=b,Y=1}(x)$. We use the bound derived in Appendix \ref{Model} to determine how densely we can quantize our original data, while maintaining the fidelity of the distribution, $(X,A,Y)$, over each cell. Empirically setting the confidence and error parameters to $\delta=0.95$ and $\Delta=0.05$, the bound suggests that we may only partition the true Adult, Law, and Dutch datasets into 48, 20, and 59 VQ cells, respectively. 
Thus, we use these specifications to construct discrete distributions from the true and generated samples for each dataset. Table \ref{tab_Pear} provides the resulting PCC and TV Distance results between the distributions created from the real and generated data. There is good correlation between each distribution constructed from real and generated data, with all values having at least a PCC of 0.88, the majority of which are well above 0.90. Similarly, all TV Distance values are quite low with all values falling below 0.06, though most are less than 0.05. This suggests that the generator has learned the population distribution associated with each dataset, providing us with confidence in using it to construct a more fine-grain cell decomposition from the large number of samples produced by the generator. 

To obtain a more fine-grained cell decomposition for each dataset, we set $N=256$ for all experiments conducted in the body of this paper. For this value, our bound implies that we must produce at least $259849$ samples to faithfully approximate $(X,A,Y)$ with the error parameters $\delta=0.95$ and $\Delta=0.05$. Thus, the one million samples used to construct our discrete approximations of each dataset more than satisfy this requirement.

\subsection{Accuracy-Fairness Trade-off}
In this section, we analyze the fairness-accuracy trade-off when decorrelation between the distribution of feature vectors and the sensitive attribute is not required. We particularly explore various configurations of minimization problem $(4)$ under both awareness and unawareness of the sensitive attribute.

Fig. \ref{Acc_fair_trade} provides a panel of Pareto frontiers that summarize the accuracy-fairness trade-off for different pairings of fairness definitions (see Appendix~\ref{extra_unawareness_pareto} for more analyzed combinations). The plots in each row correspond to one of the three datasets. The first three columns provide results under awareness of the sensitive attribute, while the results in the final column are under unawareness of the sensitive attribute. Along the $x$-axis we relax a group fairness constraint (or pair of constraints). The Individual fairness relaxation budget is scaled differently and shown at the top of each plot along the $x$-axis. The $y$-axis of each plot shows the resulting accuracy of the Bayesian classifier operating under the corresponding fairness relaxation budget. Each curve in a plot corresponds to a different constraint setting. For example, a group fairness relaxation of 0.3 for a DP+EA curve means that $\epsilon_{DP}=\epsilon_{EA}=0.3$, while group fairness relaxation of 0.10 and individual fairness relaxation of 0.12 for a DP+Ind curve means that $\epsilon_{DP}=0.10$ and $\epsilon_{Ind}=0.12$. 
Typically, most group fairness notions can be satisfied exactly in isolation with little accuracy dropoff, as evidenced by the first column of Pareto frontiers. 
This tends to change when two fairness constraints are paired together. The second column couples DP with each of the other fairness notions, while the third column couples EA with each of the other fairness notions. For the Law and Adult datasets, tension can be observed in pairings with EA, while pairings with DP are more easily satisfied. However, the converse is true for the Dutch Census dataset, suggesting that the tension between different group fairness notions is distributionally dependent. The Pareto frontiers in the fourth column of plots are under unawareness of the sensitive attribute. Compared to the situation of awareness, there is a clear deterioration in performance in the Bayesian classifier's accuracy, but the extent of the accuracy dropoff is also distributionally dependent. For example, strictly satisfying EA+PE causes an accuracy reduction of 3\% between the awareness and unawareness situations for the Adult dataset. However, for the Law dataset strictly satisfying EA+PE leads to an accuracy dropoff of 10\% between the awareness and unawareness situations.

\begin{figure}
  \begin{center}
    \includegraphics[width=0.9\textwidth]{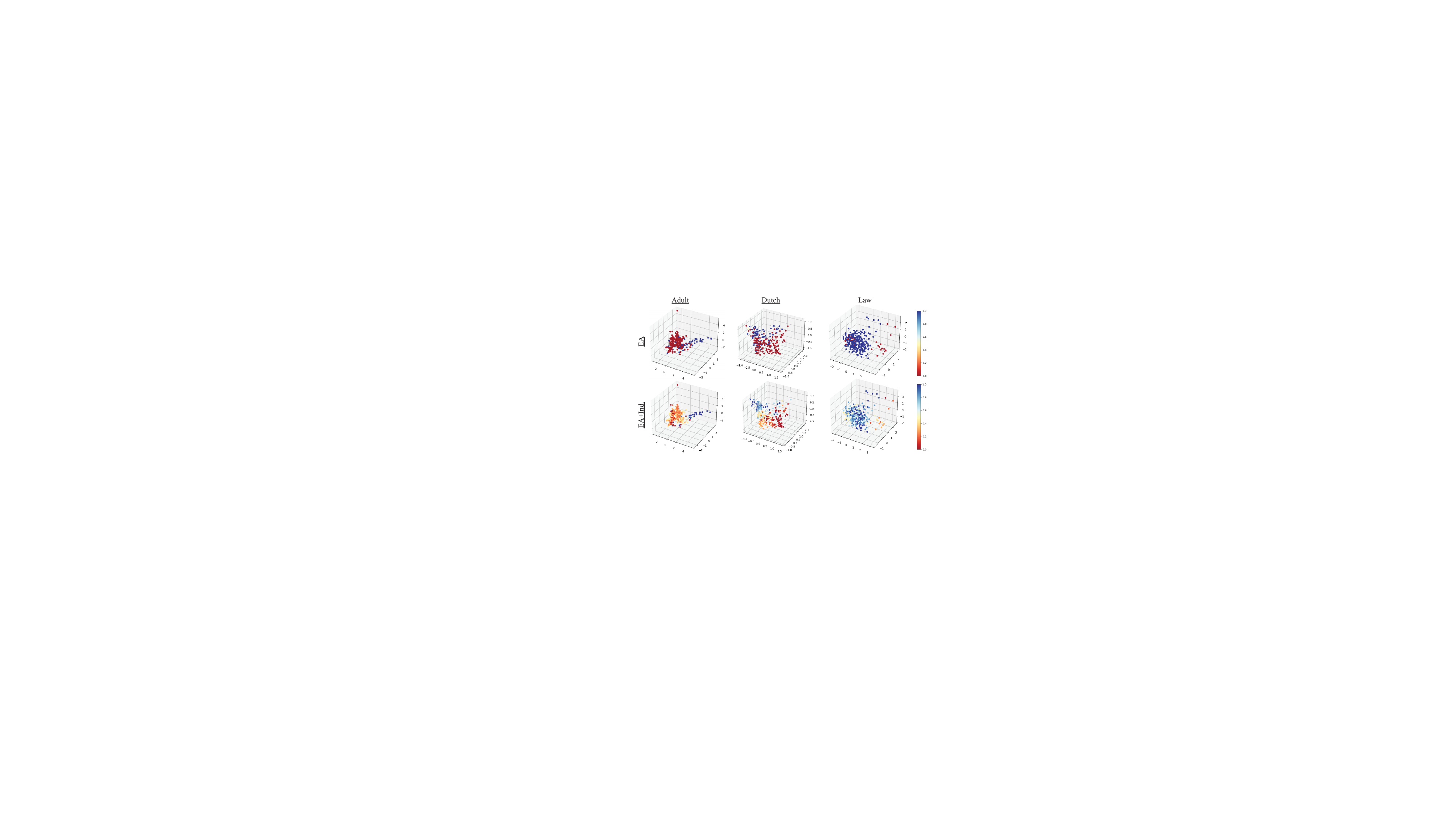}
  \end{center}
  \caption{Dimensionality reduction of feature vectors.}
  \label{Dim_Red}
\end{figure}

Fig. \ref{Acc_fair_trade} also shows that pairing individual fairness with just one group fairness constraint has the potential to considerably reduce the Bayesian classifier's ability to make fair and accurate decisions, particularly in the case of unawareness. Fig. \ref{Dim_Red} provides insight into this phenomenon. In this figure, we produce dimensionality reduction plots, using factor analysis of mixed data (FAMD), to reduce the dimensions of each VQ cell centroid to three \cite{pages2014multiple}. 
Each 3D point is color-coded according to the score of its associated VQ cell (the probability that it receives a positive class label). Points in close proximity to each other represent neighboring VQ cells. The top row of plots displays the results for which the EA constraint is exactly satisfied but without any restrictions on individual fairness. The bottom row of plots also satisfies the EA constraint but with an imposed individual fairness budget of $\epsilon_{IF}=0.15$. In each of the plots in the top row, collections of red and blue points in close proximity to each other can be observed, indicating that VQ cells in close proximity to each other are receiving drastically different scores. The bottom row of plots displays a smoother transition between red and blue points since the individual fairness constraint prohibits the scores that the Bayesian classifier assigns to neighboring VQ cell from being drastically different. Hence, this constraint prohibits the Bayesian classifier from arbitrarily penalizing different VQ cells to satisfy the group fairness constraint.

\subsection{Transfer Fairness to Decorrelated Domain}
\begin{table}[t]
\caption{Results for transferring fairness to decorrelated domain for Adult, Dutch, and Law datasets.}
\label{accuracy_dec_adult}
\centering
\resizebox{\textwidth}{!}{\begin{tabular}{c c   c c  c    c c  c}
 \hline
  & & \multicolumn{3}{c}{Awareness}&  \multicolumn{3}{c}{Unawareness} \\
 \hline
  Dataset&\begin{tabular}{@{}c@{}}Fairness \\ Measure\end{tabular} & \begin{tabular}{@{}c@{}}Baseline \\ Correlation\end{tabular}  & \begin{tabular}{@{}c@{}}Correlation \\ Reduction \end{tabular}$(\uparrow)$ & \begin{tabular}{@{}c@{}}Acc. \\ Reduction \end{tabular}$(\downarrow)$ & \begin{tabular}{@{}c@{}}Baseline \\ Correlation\end{tabular} & \begin{tabular}{@{}c@{}}Correlation \\ Reduction \end{tabular}$(\uparrow)$& \begin{tabular}{@{}c@{}}Acc. \\ Reduction\end{tabular}   $(\downarrow)$\\
 \hline

 &DP+EA  & 2.000 & 2.000 \ (0.000) & 0.005 \ (0.008) & 0.920 & 0.920 \ (0.000) & 0.007 \ (0.008)\\
 &DP+EOd  & & 2.000 \ (0.000) &  0.008 \ (0.007) & & 0.920 \ (0.000) & 0.010 \ (0.007)\\
 Adult&EA+EOd   & & 1.939 \ (0.084) & 0.010 \ (0.012) & & 0.870 \ (0.080)& 0.012 \ (0.005)\\
 &DP+Ind   & & 2.000 \ (0.000)& 0.014 \ (0.008) & & 0.920 \ (0.000) & 0.003 \ (0.002)\\
 &EA+Ind & & 1.974 \ (0.044) & 0.021 \ (0.006) & & 0.920 \ (0.000) & 0.003 \ (0.000)\\
 &EOd+Ind & & 2.000 \ (0.000) & 0.015 \ (0.004) & & 0.918 \ (0.003) & 0.003 \ (0.002)\\
 \hline
 
 &DP+EA  & 2.000 & 2.000 \ (0.000) & 0.017 \ (0.016)  & 0.560 &  0.544 (0.019) & 0.025 \ (0.023)\\
 &DP+EOd & & 2.000 \ (0.000) & 0.009 \ (0.016) & & 0.498 \ (0.053) & 0.015 \ (0.022)\\
 Dutch & EA+EOd & & 1.801 \ (0.099) & 0.021 \ (0.019)  & & 0.367 \ (0.088) & 0.019 \ (0.017)\\
 &DP+Ind & & 2.000 \ (0.000) &  0.020 \ (0.008) & & 0.560 \ (0.000) & 0.009 \ (0.007)\\
 &EA+Ind & & 1.989 \ (0.032) &  0.061 \ (0.011) & & 0.560 \ (0.000) & 0.012 \ (0.002)\\
 &EOd+Ind & & 1.980 \ (0.035) & 0.022 \ (0.009) &  & 0.542 \ (0.031)& 0.011 \ (0.006)\\
 \hline
 &DP+EA  & 2.000 & 2.000 \ (0.000) & 0.021 \ (0.018) & 0.980 & 0.980 \ (0.000) & 0.026 \ (0.025)\\
 &DP+EOd & & 2.000 \ (0.000) & 0.020 \ (0.018) & & 0.980 \ (0.000) & 0.014 \ (0.013)\\
 Law&EA+EOd & & 1.880 \ (0.113) & 0.019 \ (0.026) & & 0.910 \ (0.074) & 0.010 \ (0.012)\\
 &DP+Ind  & & 2.000 \ (0.000) & 0.021 \ (0.013) & & 0.980 \ (0.000) & 0.005 \ (0.005)\\
 &EA+Ind & & 2.000 \ (0.000) &  0.056 \ (0.006) & & 0.980 \ (0.000) & 0.004 \ (0.001)\\
 &EOd+Ind & & 1.977 \ (0.021) & 0.020 \ (0.006) &  & 0.980 \ (0.000) & 0.005 \ (0.004)\\
 \hline
\end{tabular}}
\end{table}

In this section, we analyze the extent to which the space of feature vectors can be decorrelated with respect to the sensitive attribute while preserving the fairness of the Bayesian classifier’s decisions. In our experiments, we set the hyperparameters in minimization problem  $(9)$ to 
$\lambda=15$ and $\beta=25$. Table \ref{accuracy_dec_adult} displays the results obtained from our decorrelation analysis under the preservation of different combinations of fairness definitions for the Adult, Dutch, and Law datasets.
\begin{figure}
  \begin{center}
    \includegraphics[width=0.99\textwidth]{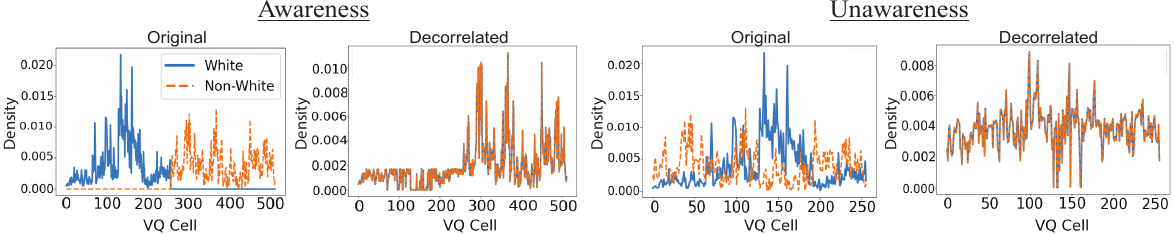}
  \end{center}
  \caption{Visualization of the original and decorrelated feature vector distributions when the sensitive attribute is and is not used for decorrelation.} 
  \label{priv_plots}
\end{figure}

For all combinations involving individual fairness, we hold $\epsilon_{Ind}=0.05$, meaning that the deviation in the probability of neighboring feature vectors being assigned a positive class label by the Bayesian classifier should be no more than $5$\%. All group fairness constraints are tested for relaxations of $0.0, 0.5,$ and $0.10$. Thus, we report the average over three values in each of the cells of this table along with their standard deviation in parentheses. We report the correlation between the distribution of feature vectors and the sensitive attribute prior to appyling the decorrelation mapping for each dataset under "Baseline Correlation." The reduction in correlation achieved from applying the the decorrelation mapping is given by "Correlation Reduction." As these values become closer to the value of the baseline correlation, the distribution of feature vectors becomes more decorrelated with the sensitive attribute. We further report the accuracy dropoffs that result from using the decorrelated distribution of feature instead of the original distribution for fair classification under "Acc. Reduction."
The results in this table suggest that it is possible to decorrelate the space of feature vectors with little accuracy drop-off in situations in which the sensitive attribute is and is not available for constructing the decorrelation mapping. That is, for all datasets, the accuracy of the fair Bayesian classifier on the decorrelated feature vectors typically falls by around $2$\% or less on average, with the distribution of feature vectors becoming almost completely decorrelated with the sensitive attribute in all cases except for the EA+EOd pairing. The EA+EOd pairing is the only one that struggles to come close to achieving complete decorrelation between the distribution of feature vectors and the sensitive attribute for all datasets. However, even for this pairing, the transformed feature vectors are much more decorrelated with respect to the sensitive attribute compared to the non-transformed feature vectors.

We illustrate the effect of these decorrelation mappings under awareness and unawareness of the sensitive attribute in Fig. \ref{priv_plots}, plotting the conditional distributions $P(X|A=\text{White})$ and $P(X|A=\text{Non-White})$ over all VQ cells for the Law dataset. 
Under awareness of the sensitive attribute, the number of cells along the $x$-axis doubles compared to unawareness since access to the sensitive attribute allows the Bayesian classifier to separate each cell into two decision regions. 
These plots demonstrate the effectiveness of the decorrelation mappings with the density of feature vectors in each VQ cell perfectly overlapped in the awareness and unawareness plots in which decorrelation has been applied. This suggests that the accuracy drop-off of a fair classifier is less dependent on how correlated the features are with the sensitive attribute and more dependent on the strictness of the fairness enforcement.



\section{Conclusion, Broader Impact, and Limitations}
\label{conclusion}
This paper explores the trade-off between fairness and accuracy under four practical scenarios that limit the data available for classification. We investigate the behavior of a fair Bayesian classifier by approximating the joint distribution of the feature vector, sensitive attribute, and class label. Our exploration encompasses situations in which the sensitive attribute may or may not be available and correlations between feature vectors and the sensitive attribute may or may not be eliminated. 
Our results also suggest that the pursuit of individual fairness through the enforcement of local scoring consistency may clash with notions of group fairness, particularly when the sensitive attribute is unavailable to a model. Additionally, we demonstrate that it is often feasible to reduce the correlation between the space of feature vectors and the sensitive attribute while preserving the accuracy of a fair model.

\textbf{Broader Impact} Fairness in machine learning is a problem of increasing relevance in today's society given the huge increase in applications that use such technology for decision-making. Feasibility studies like ours, which analyze the fairness of such models for real world application scenarios, thus become critical for developing ethical technology. Our study can help developers of fair ML models save time and resources by providing them with a useful tool to understand the extent to which it is even possible for them to produce such models.

\textbf{Limitations} We would like to mention two limitations with regards to our approach. First, a sizeable enough dataset for training the generator is needed to model the population distribution of the feature vector, sensitive attribute, and class label. This is, however, not a problem unique to our approach, but relevent to all machine learning applications that rely on the substance of the data used for training.  Our experimentation has shown that a dataset of approximately 20000 samples suffices to model this distribution effectively.
Second, quantifying the similarity among individuals introduces subjectivity. The choice of distance metric and the specific parameters used in the metric inherently imply certain assumptions about the similarity between two individuals. Thus, adjusting the distance metric and parameters may produce different results. Noteably, this issue is not particular to our framework, but rather a broader challenge inherent in the analysis of individual fairness. 


\section{Disclaimer}
This paper was prepared for informational purposes in part by the Artificial Intelligence Research group of JPMorgan Chase \& Co. and its affiliates (“JP Morgan”), and is not a product of the Research Department of JP Morgan. JP Morgan makes no representation and warranty whatsoever and disclaims all liability, for the completeness, accuracy or reliability of the information contained herein. This document is not intended as investment research or investment advice, or a
recommendation, offer or solicitation for the purchase or sale of any security, financial instrument, financial product or service, or to be used in any way for evaluating the merits of participating in any transaction, and shall not constitute a solicitation under any jurisdiction or to any person, if such solicitation under such jurisdiction or to such person would be unlawful.

\bibliographystyle{ACM-Reference-Format}
\bibliography{main}

\appendix

\section{Modeling the Population Distribution}
\label{Model}
Let $X, A, \text{ and } Y$ represent the random feature vector and  sensitive attribute and class label random variables respectively associated with the sample spaces $\mathcal{X}, \mathcal{A}, \text{ and } \mathcal{Y}$. For simplicity, we assume that $\mathcal{X}=\mathbb{R}^k, \mathcal{A}=\{a,b\}, \text{ and } \mathcal{Y}=\{0,1\}$. 
Let $D^{(tr)} = \{\mathbf{x}_i^{(tr)},g_i^{(tr)},y_i^{(tr)}|\mathbf{x}_i^{(tr)}\in\mathcal{X},g_i^{(tr)}\in\mathcal{A},y_i^{(tr)}\in\mathcal{Y}\}_{i=1}^{T}$ 
represent a \textit{dataset}, containing $T$ samples from the joint distribution $(X,A,Y)$. A \textit{generator}, $G:\mathcal{L}\rightarrow \mathcal{X\times Y \times A}$, learns the latent representation of $(X,A,Y)$ from $D^{(tr)}$ and can be used to sample the distribution $(X, A, Y)$. Let $D^{(ts)}=\{\mathbf{x}^{(ts)}_i,g^{(ts)}_i,y^{(ts)}_i|\mathbf{x}^{(ts)}_i\in\mathcal{X},g^{(ts)}_i\in\mathcal{A},y^{(ts)}_i\in\mathcal{Y}\}_{i=1}^{M}$ be a dataset produced by generating $M$ samples using $G$. \\

\begin{definition}
(Cell decomposition)
A cell decomposition of size $N$ over $\mathcal{X}$ is given by a disjoint set of $N$ cells, $\mathcal{C}_1,...,\mathcal{C}_{N} \subseteq \mathcal{X}$, that cover $\mathcal{X}$. These cells are defined by their centroids, $\mathbf{x}_1,...,\mathbf{x}_N\in \mathcal{X}$,  where for each $i\in \mathbb{N}_{\leq N}^+$ and for some distance metric, $d_\mathcal{X}$:\\
\begin{align}
    \mathcal{C}_i=\{\mathbf{k}|\mathbf{k}\in\mathcal{X}, d_\mathcal{X}(\mathbf{x}_i,\mathbf{k}) \leq d_\mathcal{X}(\mathbf{x}_j,\mathbf{k}), \forall j\in \mathbb{N}_{\leq N}^+\setminus i\}. \notag
\end{align}   
\end{definition}

We construct a discrete approximation of $(X,A,Y)$, represented by $(\tilde{X},\tilde{A},\tilde{Y})$, by inducing a cell decomposition over $\mathcal{X}$, and using $D$ to quantify the statistical properties of $(\tilde{X},\tilde{A},\tilde{Y})$ over each cell. Specifically, let $\mathcal{C}=\{\mathbf{x}_i |\mathbf{x}_i\in \mathcal{X}\}_{i=1}^{N}$ be the set containing the $N$ centroids produced by a cell decomposition.
We construct the probability mass function (pmf) of $\tilde{X}$ as follows: 
\begin{equation}
    p_{\tilde{X}}(\mathbf{k}) = \begin{cases}
      \frac{M_i}{M} &\text{, if } \exists i\in\mathbb{N}_{\leq N}^+ \ \text{ s.t. } \ \mathbf{k}=\mathbf{x}_i\\
      0 &\text{, otherwise}
    \end{cases},
\end{equation}
where $M_i$ represents the number of feature vectors from the dataset $D$ in cell $\mathcal{C}_i$.
The conditional joint distribution, $\tilde{Y},\tilde{A}|\tilde{X}=\mathbf{x}_i$, is given by the portion of samples from $D^{(ts)}$ with feature vectors in cell $C_i$ for which $g$ and $y$ are their sensitive attribute and class label. That is,
\begin{equation}
    p_{\tilde{A},\tilde{Y}|\tilde{X}=\mathbf{x}_i}(g,y) = \frac{\sum_{j=1}^MI[\mathbf{x}_j^{(ts)}\in \mathcal{C}_i,g_j^{(ts)}=g,y_j^{(ts)}=y ]}{M_i},
\end{equation}
where $I$ represents the indicator function. Thus, equations (11) and (12) provide us with the joint distribution $(\tilde{X},\tilde{A},\tilde{Y})$.

Since all of our analyses are performed on the approximated distribution, we must sample $G$ densely enough to guarantee that the statistical information inside of each cell of this distribution accurately characterizes the information in each cell of the true population distribution.
The following observation provides us with a guide for sampling $G$.
\begin{observation} 
\label{obs1}
(PAC Bound for sampling generator) Suppose we induce a cell decomposition of size $N$ over $\mathcal{X}$ using Lloyd's algorithm for vector quantization \cite{lloyd1982least,linde1980algorithm}. Let $j$ refer to an arbitrary cell index. Define $\nu_{j,g}^y=P(A=g,Y=y|X\in C_j)$ and $\mu_{j,g}^y=P(\tilde{A}=g,\tilde{Y}=y|\tilde{X}=\mathbf{x}_j)$. Then, for some $\Delta \text{ and } \delta$, using a total of $M=\frac{-N}{2\Delta^2}ln(\frac{1-\delta}{8})$ samples from $G$ will guarentee that $P(\cup_{g\in\mathcal{A},y\in\mathcal{Y}}|\nu_{j,g}^y-\mu_{j,g}^y| \geq \Delta) \leq 1-\delta$ on average for all cells.
\end{observation}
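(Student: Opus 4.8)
The plan is to prove the stated PAC bound by applying a Hoeffding-type concentration inequality to each cell separately, handling the union over the four $(g,y)$ pairs within a cell via a union bound, and then averaging over the $N$ cells. First I would fix a cell index $j$ and recall that $\mu_{j,g}^y = p_{\tilde A,\tilde Y\mid\tilde X=\mathbf x_j}(g,y)$ is, by equation (12), an empirical average of $M_j$ i.i.d. Bernoulli indicators $I[\mathbf x_i^{(ts)}\in\mathcal C_j, g_i^{(ts)}=g, y_i^{(ts)}=y]$ conditioned on the event $\mathbf x_i^{(ts)}\in\mathcal C_j$, whose conditional mean is exactly $\nu_{j,g}^y = P(A=g,Y=y\mid X\in C_j)$ (using that densely sampling $G$ makes the generated distribution match the population distribution on each cell). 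Hoeffding's inequality then gives, for a single pair $(g,y)$, that $P(|\nu_{j,g}^y - \mu_{j,g}^y|\ge\Delta\mid M_j)\le 2e^{-2M_j\Delta^2}$, and a union bound over the $|\mathcal A|\cdot|\mathcal Y| = 4$ pairs yields $P\big(\bigcup_{g,y}|\nu_{j,g}^y-\mu_{j,g}^y|\ge\Delta \ \big|\ M_j\big)\le 8e^{-2M_j\Delta^2}$.

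Next I would relate $M_j$ to $M$ and $N$. Under Lloyd's algorithm the cells partition the samples, so $\sum_j M_j = M$ and on average $M_j = M/N$. Substituting the prescribed $M = \frac{-N}{2\Delta^2}\ln\!\big(\frac{1-\delta}{8}\big)$ gives $M_j = \frac{-1}{2\Delta^2}\ln\!\big(\frac{1-\delta}{8}\big)$ on average, whence $8e^{-2M_j\Delta^2} = 8\cdot\frac{1-\delta}{8} = 1-\delta$. This is exactly the claimed per-cell (on-average) failure probability, so the bound follows by reading the chain of inequalities backwards, and then noting that "on average for all cells" refers to using the average value $M/N$ for $M_j$.

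The main obstacle — and the place where the argument has to be stated carefully rather than being a routine calculation — is the interplay between the randomness of the cell populations $M_j$ and the "on average" qualifier. Two subtleties arise: (i) $M_j$ is itself random (multinomial), so strictly one either conditions on $M_j$ and then argues that taking $M_j$ at its mean $M/N$ yields the stated guarantee "on average," or one invokes Jensen/convexity of $t\mapsto e^{-2t\Delta^2}$ to pass between $\mathbb E[e^{-2M_j\Delta^2}]$ and $e^{-2\,\mathbb E[M_j]\,\Delta^2}$; and (ii) the fidelity claim that $\nu_{j,g}^y$ equals the conditional mean of the generated indicators rests on the assumption, stated earlier in the paper, that $(X\in C_j, A, Y)\approx(\tilde X=\mathbf x_j,\tilde A,\tilde Y)$, so this should be cited rather than re-derived. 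I would therefore organize the proof as: (1) identify $\mu_{j,g}^y$ as an empirical Bernoulli mean with the correct expectation; (2) apply Hoeffding plus a $4$-fold union bound for fixed cell and fixed $M_j$; (3) substitute $M_j = M/N$ with the given $M$ to collapse the bound to $1-\delta$; (4) remark that this is the per-cell average guarantee asserted. The routine parts — the exact form of Hoeffding and the arithmetic simplification of $8\cdot\frac{1-\delta}{8}$ — I would not belabor.
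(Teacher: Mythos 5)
Your proposal is correct and follows essentially the same route as the paper's proof: Hoeffding's inequality per $(g,y)$ pair, a union bound over the four pairs giving the factor $8e^{-2\Delta^2 M_j}$, the Lloyd's-algorithm equal-density assumption $M_j \approx M/N$, and solving $8e^{-2\Delta^2 M/N}=1-\delta$ for $M$. You are in fact somewhat more careful than the paper in flagging that $M_j$ is random and that the "on average" qualifier is doing real work, but the underlying argument is identical.
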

\begin{proof}
    Consider a series of 4-dimensional random vectors, $B_1,...,B_n$, where each random variable in a vector has a Bernoulli distribution characterized by the probability of selecting a feature vector that has class label $Y=y$ and group label $A=g$ or not from cell $\mathcal{C}_j$, where $g\in\mathcal{A},y\in\mathcal{Y}$. The sample mean and population means of these vectors are given by $\mu=[\mu_{j,a}^0,\mu_{j,a}^1,\mu_{j,b}^0, \mu_{j,b}^1]^T$ and $\nu=[\nu_{j,a}^0,\nu_{j,a}^1,\nu_{j,b}^0, \nu_{j,b}^1]^T$, respectively. Thus, we have that:
\begin{align}
    &P(\cup_{g,y}(|\mu_{j,g}^y-\nu_{j,g}^y|\geq \Delta))\\
    &\leq \sum_{g,y}P(|\mu_{j,g}^y-\nu_{j,g}^y|\geq \Delta) \text{ ( Union Bound) }\notag\\
    &\leq 8e^{-2\Delta^2s_j} \text{ (Hoeffding Inequality) },
\end{align}
where $s_j$ refers to the number of samples inside cell $\mathcal{C}_j$. Lloyd's algorithm asymptotically induces a cell decomposition in which the density in each cell is equal \cite{lloyd1982least}. Thus, we assume $s_j=N/N$. Now, we can average over (13) and (14) to get:
\begin{align}
    &\frac{1}{N}\sum_{j}P(\cup_{g,y}(|\mu_{j,g}^y-\nu_{j,g}^y|\geq \Delta))\leq \frac{\cancel{N}}{\cancel{N}}8e^{-\frac{2\Delta^2M}{N}}
\end{align}
Setting $8e^{-\frac{2\Delta^2M}{N}}=1-\delta$, and solving for $M$ completes this proof.
\end{proof}
Thus, Observation \ref{obs1} provides us with a guide for how densely we should sample $G$ to guarantee the fidelity of the joint distribution of the class label and sensitive attribute over each cell.

\section{Derivation of Minimization Problem (4)}
\label{app_LP_const}
Let $S^{*}:\mathcal{X}\rightarrow[0,1]$ and $S^{(F)}:\mathcal{X}\rightarrow[0,1]$ respectively represent the optimal unconstrained and fair randomized scoring functions whose outputs represent the probability of producing a $1$ label for a given feature vector. Furthermore, let
\begin{equation}
    \hat{Y}^{*}(\mathbf{x})=\begin{cases}
        1&\text{, w.p. } S^{*}(\mathbf{x})\\
        0&\text{, w.p. } 1-S^{*}(\mathbf{x})
    \end{cases}
    \ \ \ \ \ \ \ \text{and}  \ \ \ \ \ \ \
    \hat{Y}^{(F)}(\mathbf{x})=\begin{cases}
        1&\text{, w.p. } S^{(F)}(\mathbf{x})\\
        0&\text{, w.p. } 1-S^{(F)}(\mathbf{x})
    \end{cases}
    \notag
\end{equation}
With this information, we derive the objective function in minimization problem (4), which represents the reduction in accuracy between the unconstrained and fair Bayesian classifiers, by showing that it is equivalent to:
\begin{align}
    P(\hat{Y}^{*}(X)=Y)-P(\hat{Y}^{(F)}(X)=Y) \notag
\end{align}
We further derive the group fairness linear constraints provided in minimization problem (4) by showing that they are equivalent the the following list of constraints:
\begin{align}
    & |P(\hat{Y}^{(F)}(X)=1|A=a)-P(\hat{Y}^{(F)}(X)=1|A=b)|\leq\epsilon_{DP} \notag \\
    & |P(\hat{Y}^{(F)}(X)=1|A=a,Y=1)-P(\hat{Y}^{(F)}(X)=1|A=b,Y=1)|\leq\epsilon_{EOp} \notag \\
    & |P(\hat{Y}^{(F)}(X)=1|A=a,Y=0)-P(\hat{Y}^{(F)}(X)=1|A=b,Y=0)|\leq\epsilon_{PE} \notag \\
    & |P(\hat{Y}^{(F)}(X)=Y|A=a)-P(\hat{Y}^{(F)}(X)=Y|A=b)|\leq\epsilon_{EA}, \notag
\end{align}
Note that equalized odds is omitted since it is simply a combination of predictive equality and equal opportunity. We also omit the derivation of the individual fairness constraints since they are constructed in a straightforward manner in the body of the paper.
\subsection{Objective Function:} Observe that:
\begin{align}
    &P(\hat{Y}^{*}(X)=Y)-P(\hat{Y}^{(F)}(X)=Y) = \sum_{i=1}^{N}\left[ P(\hat{Y}^{*}(X)=Y, X=\mathbf{x}_i)-P(\hat{Y}^{(F)}(X)=Y,X=\mathbf{x}_i)\right]\notag\\
    &=\sum_{i=1}^{N}\Bigl[P(\hat{Y}^{*}(X)=1,Y=1, X=\mathbf{x}_i)+P(\hat{Y}^{*}(X)=0,Y=0, X=\mathbf{x}_i) \notag
\end{align}
\begin{align}
    & \ \ \ \ \ \ \ \ \ \ \ \ \ \ -P(\hat{Y}^{(F)}(X)=1,Y=1, X=\mathbf{x}_i)-P(\hat{Y}^{(F)}(X)=0,Y=0, X=\mathbf{x}_i)\Bigr]\notag\\
    &=\sum_{i=1}^{N}\Bigl[\{P(\hat{Y}^{*}(X)=1|\underbrace{\cancel{Y=1}}_{\substack{\text{\makebox[0pt]{Conditional independence given $X$.}} \\ \text{\makebox[0pt]{Similar for other cancellations.}}}}, X=\mathbf{x}_i) - P(\hat{Y}^{(F)}(X)=1|\cancel{Y=1}, X=\mathbf{x}_i)\}\underbrace{P(Y=1, X=\mathbf{x}_i)}_{\substack{\text{\makebox[0pt]{$=\mathbf{p}^{1}[i]$ by definition.}} \\ \text{\makebox[0pt]{Similar for other substitutions.}}}} \notag\\
    & \ \ \ \ \ \ \ \ \ \ \ \ \ \ +\{P(\hat{Y}^{*}(X)=0|\cancel{Y=0}, X=\mathbf{x}_i) - P(\hat{Y}^{(F)}(X)=0|\cancel{Y=0},X=\mathbf{x}_i)\}P(Y=0,X=\mathbf{x}_i)\Bigr]\notag\\
    &=\sum_{i=1}^{N}\Bigl[\{\underbrace{P(\hat{Y}^{*}(X)=1| X=\mathbf{x}_i)}_{\substack{\text{\makebox[0pt]{$=\mathbf{s}^{*}[i]$ by definition.}} \\ \text{\makebox[0pt]{Similar for other substitutions.}}}} - P(\hat{Y}^{(F)}(X)=1| X=\mathbf{x}_i)\} \mathbf{p}^1[i] \notag\\
    & \ \ \ \ \ \ \ \ \ \ \ \ \ \ +\{P(\hat{Y}^{*}(X)=0| X=\mathbf{x}_i) - P(\hat{Y}^{(F)}(X)=0|X=\mathbf{x}_i)\}\mathbf{p}^0[i]\Bigr]\notag\\
    &=\sum_{i=1}^{N}\Bigl[\{\mathbf{s}^{*}[i] - \mathbf{s}^{(F)}[i]\} \mathbf{p}^1[i] +\{1-\mathbf{s}^{*}[i] - 1+\mathbf{s}^{(F)}[i]\}\mathbf{p}^0[i]\Bigr]\notag\\
    &=\sum_{i=1}^{N}\Bigl[(\mathbf{s}^{*}[i] - \mathbf{s}^{(F)}[i])(\mathbf{p}^1[i] -\mathbf{p}^0[i])\Bigr]\underbrace{=}_{(A)}\sum_{i=1}^{N}\mathbf{m}[i](\mathbf{p}^1[i] -\mathbf{p}^0[i])=\mathbf{m}^{T}(\mathbf{p}^1 -\mathbf{p}^0) \notag
\end{align}
where $(A)$ holds since
\begin{align}
    &\sum_{i=1}^{N}\Bigl[(\mathbf{s}^{*}[i] - \mathbf{s}^{(F)}[i])(\mathbf{p}^1[i] -\mathbf{p}^0[i])\Bigr]\notag\\
    &=\sum_{i\in\mathcal{I}^+}\Bigl[(\underbrace{\mathbf{s}^{*}[i] - \mathbf{s}^{(F)}[i]}_{\text{\makebox[0pt]{$\geq0$ since $\mathbf{s}^{*}[i]=1$ for $i\in\mathcal{I}^+$}}})(\mathbf{p}^1[i] -\mathbf{p}^0[i])\Bigr]+\sum_{i\in\mathcal{I}^-}\Bigl[(\underbrace{\mathbf{s}^{*}[i] - \mathbf{s}^{(F)}[i]}_{\text{\makebox[0pt]{$\leq0$ \ since $\mathbf{s}^{*}[i]=0$ for $i\in\mathcal{I}^-$}}})(\mathbf{p}^1[i] -\mathbf{p}^0[i])\Bigr]\notag
\end{align}
\begin{align}
    &=\sum_{i\in \mathcal{I}^+}\Bigl[\underbrace{|\mathbf{s}^{*}[i] - \mathbf{s}^{(F)}[i]|}_{\text{\makebox[0pt]{$\mathbf{m}[i]$ by definition for $i\in \mathcal{I}^+$.}}}(\mathbf{p}^1[i] -\mathbf{p}^0[i])\Bigr]+\sum_{i\in\mathcal{I}^-}\Bigl[\underbrace{-|\mathbf{s}^{*}[i] - \mathbf{s}^{(F)}[i]|}_{\text{\makebox[0pt]{$\mathbf{m}[i]$ by definition for $i\in\mathcal{I}^-$}}}(\mathbf{p}^1[i] -\mathbf{p}^0[i])\Bigr]\notag\\
    &=\sum_{i\in\mathcal{I}^+}\mathbf{m}[i](\mathbf{p}^1[i] -\mathbf{p}^0[i])+\sum_{i\in\mathcal{I}^-}\mathbf{m}[i](\mathbf{p}^1[i] -\mathbf{p}^0[i])=\sum_{i=1}^{N}\mathbf{m}[i](\mathbf{p}^1[i] -\mathbf{p}^0[i]).\notag
\end{align}

\subsection{Demographic Parity:} 
Observe that:
\begin{align}
    &P(\hat{Y}^{(F)}(X)=1|A=a) = \sum_{i=1}^{N}P(\hat{Y}^{(F)}(X)=1,X=\mathbf{x}_i|A=a) \notag\\
    &= \sum_{i\in\mathcal{I}^+}P(\hat{Y}^{(F)}(X)=1,X=\mathbf{x}_i|A=a) + \sum_{i\in\mathcal{I}^-}P(\hat{Y}^{(F)}(X)=1,X=\mathbf{x}_i|A=a)\notag
\end{align}
\begin{align}
    &= \sum_{i\in\mathcal{I}^+}\overbrace{P(\hat{Y}^{(F)}(X)=1|X=\mathbf{x}_i,\underbrace{\cancel{A=a}}_{\substack{\text{\makebox[0pt]{Conditional independence given $X$.}} \\ \text{\makebox[0pt]{Similar for other cancellations.}}}})}^{\mathbf{s}^{(F)}[i]=\mathbf{s}^{*}[i]-(\mathbf{s}^{*}[i]-\mathbf{s}^{(F)}[i]) \text{ by definition.}}P(X=\mathbf{x}_i|A=a) \notag\\
    & \ \ \ \ \ \ \ \ \ \ \ \ \ \ + \sum_{i\in\mathcal{I}^-}\underbrace{P(\hat{Y}^{(F)}(X)=1|X=\mathbf{x}_i,\cancel{A=a})}_{\substack{\mathbf{s}^{(F)}[i]=\mathbf{s}^{*}[i]-(\mathbf{s}^{*}[i]-\mathbf{s}^{(F)}[i]) \text{ by definition }}}P(X=\mathbf{x}_i|A=a)\notag\\
    &=\sum_{i\in\mathcal{I}^+}[\mathbf{s}^{*}[i]-\underbrace{(\mathbf{s}^{*}[i]-\mathbf{s}^{(F)}[i])}_{\substack{\text{\makebox[0pt]{$|\mathbf{s}^{*}[i]-\mathbf{s}^{(F)}[i]|$}} \\ \makebox[0pt]{$=\mathbf{m}[i]$} \\ \text{\makebox[0pt]{by definition for $i\in \mathcal{I}^+$.} }}}]\underbrace{P(X=\mathbf{x}_i|A=a)}_{\substack{\text{\makebox[0pt]{$\mathbf{p}_a[i]$ by definition.}}}} + \sum_{i\in\mathcal{I}^-}[\mathbf{s}^{*}[i]-\underbrace{(\mathbf{s}^{*}[i]-\mathbf{s}^{(F)}[i])}_{_{\substack{\text{\makebox[0pt]{$-|\mathbf{s}^{*}[i]-\mathbf{s}^{(F)}[i]|$}} \\ \makebox[0pt]{$=\mathbf{m}[i]$} \\ \text{\makebox[0pt]{by definition for $i\in \mathcal{I}^-$.} }}}}]P(X=\mathbf{x}_i|A=a)\notag\\
    & =\sum_{i=1}^{N}(\mathbf{s}^{*}[i]-\mathbf{m}[i])\mathbf{p}_a[i] \notag \\
    & =(\mathbf{s}^{*}-\mathbf{m})^T\mathbf{p}_a \notag
\end{align}
A similar derivation of $P(\hat{Y}^{(F)}(X)=1|A=b)$ yields: $P(\hat{Y}^{(F)}(X)=1|A=b) = (\mathbf{s}^{*}-\mathbf{m})^T\mathbf{p}_b$. Thus, 
\begin{align}
    |P(\hat{Y}^{(F)}(X)=1|A=a)-P(\hat{Y}^{(F)}(X)=1|A=b)|\leq\epsilon_{DP} \notag
\end{align} 
is equivalent to 
\begin{align}
    |(\mathbf{s}^{*}-\mathbf{m})^T(\mathbf{p}_a-\mathbf{p}_b)|\leq \epsilon_{DP} 
    \ \ \ \ \ \ \ \ \text{ or } \ \ \ \ \ \ \ \
    |(\mathbf{p}_a-\mathbf{p}_b)^T(\mathbf{s}^{*}-\mathbf{m})|\leq \epsilon_{DP}.\notag
\end{align}

\subsection{Equal Opportunity:} Observe that:
\begin{align}
    &P(\hat{Y}^{(F)}(X)=1|A=a,Y=1) = \sum_{i=1}^{N}P(\hat{Y}^{(F)}(X)=1,X=\mathbf{x}_i|A=a,Y=1) \notag\\
    &= \sum_{ i\in \mathcal{I}^+ }P(\hat{Y}^{(F)}(X)=1,X=\mathbf{x}_i|A=a,Y=1) + \sum_{i\in \mathcal{I}^-}P(\hat{Y}^{(F)}(X)=1,X=\mathbf{x}_i|A=a,Y=1)\notag\\
    &= \sum_{i\in \mathcal{I}^+}\overbrace{P(\hat{Y}^{(F)}(X)=1|X=\mathbf{x}_i,\underbrace{\cancel{A=a,Y=1}}_{\substack{\text{\makebox[0pt]{Conditional independence given $X$.}} \\ \text{\makebox[0pt]{Similar for other cancellations.}}}})}^{\mathbf{s}^{(F)}[i]=\mathbf{s}^{*}[i]-(\mathbf{s}^{*}[i]-\mathbf{s}^{(F)}[i]) \text{ by definition.}}P(X=\mathbf{x}_i|A=a,Y=1) \notag
\end{align}
\begin{align}
    &\ \ \ \ \ \ \ \ \ + \sum_{i\in \mathcal{I}^-}\underbrace{P(\hat{Y}^{(F)}(X)=1|X=\mathbf{x}_i,\cancel{A=a,Y=1})}_{\mathbf{s}^{(F)}[i]=\mathbf{s}^{*}[i]-(\mathbf{s}^{*}[i]-\mathbf{s}^{(F)}[i]) \text{ by definition.}}P(X=\mathbf{x}_i|A=a,Y=1)\notag\\
    &=\sum_{i\in \mathcal{I}^+}\left[\mathbf{s}^{*}[i]-(\mathbf{s}^{*}[i]-\mathbf{s}^{(F)}[i])\right]\underbrace{P(X=\mathbf{x}_i|A=a,Y=1)}_{\substack{\text{\makebox[0pt]{$=\mathbf{p}_{a,1}[i]$ by definition.}} \\ \text{\makebox[0pt]{Similar for other substitutions.}}}} \notag\\
    & \ \ \ \ \ \ \ \ \ + \sum_{i\in \mathcal{I}^-}\left[\mathbf{s}^{*}[i]-(\mathbf{s}^{*}[i]-\mathbf{s}^{(F)}[i]) \right]P(X=\mathbf{x}_i|A=a,Y=1)\notag\\
    &\underbrace{=}_{\text{\makebox[0pt]{Similar to the demographic parity derivation.}}}(\mathbf{s}^{*}-\mathbf{m})^T\mathbf{p}_{a,1} \notag
\end{align}
A similar derivation of $P(\hat{Y}^{(F)}(X)=1|A=b,Y=1)$ yields: $P(\hat{Y}^{(F)}(X)=1|A=b,Y=1) = (\mathbf{s}^{*}-\mathbf{m})^T\mathbf{p}_{b,1}$. Thus, 
\begin{align}
    |P(\hat{Y}^{(F)}(X)=1|A=a,Y=1)-P(\hat{Y}^{(F)}(X)=1|A=b,Y=1)|\leq\epsilon_{EOp} \notag
\end{align}
is equivalent to 
\begin{align}
    |(\mathbf{s}^{*}-\mathbf{m})^T(\mathbf{p}_{a,1}-\mathbf{p}_{b,1})|\leq \epsilon_{EOp}
    \ \ \ \ \ \ \ \ \text{ or } \ \ \ \ \ \ \ \
    |(\mathbf{p}_{a,1}-\mathbf{p}_{b,1})^T(\mathbf{s}^{*}-\mathbf{m})|\leq \epsilon_{EOp}. \notag
\end{align}

\subsection{Predictive Equality}
This derivation is closely related to the Equal Opportunity derivation. Hence we omit it, directly claiming that
\begin{equation}
    |P(\hat{Y}^{(F)}(X)=1|A=a,Y=0)-P(\hat{Y}^{(F)}(X)=1|A=b,Y=0)|\leq\epsilon_{PE} \notag
\end{equation}
is equivalent to 
\begin{equation}
    |(\mathbf{p}_{a,0}-\mathbf{p}_{b,0})^T(\mathbf{s}^{*}-\mathbf{m})|\leq\epsilon_{PE}. \notag
\end{equation}

\subsection{Equal Accuracy:} Observe that:
\begin{align}
    &P(\hat{Y}^{(F)}(X)=Y|A=a) = P(\hat{Y}^{(F)}(X)=1,Y=1|A=a) + P(\hat{Y}^{(F)}(X)=0,Y=0|A=a)\notag
\end{align}
\begin{align}
    &=\sum_{i=1}^{N}P(\hat{Y}^{(F)}(X)=1,Y=1,X=\mathbf{x}_i|A=a) + P(\hat{Y}^{(F)}(X)=0,Y=0,X=\mathbf{x}_i|A=a) \notag
\end{align}
\begin{align}
    &= \sum_{i=1}^{N}P(\hat{Y}^{(F)}(X)=1|X=\mathbf{x}_i,\underbrace{\cancel{A=a,Y=1}}_{\substack{\text{\makebox[0pt]{Conditional independence given $X$.}} \\ \text{\makebox[0pt]{Similar for other cancellations.}}}})P(Y=1,X=\mathbf{x}_i|A=a) \notag\\
    &\ \ \ \ \ \ \ \ \ + \sum_{i=1}^{N}P(\hat{Y}^{(F)}(X)=0|X=\mathbf{x}_i,\cancel{A=a,Y=0})P(Y=0,X=\mathbf{x}_i|A=a)\notag
\end{align}
\begin{align}
    &= \sum_{i\in \mathcal{I}^+}\overbrace{P(\hat{Y}^{(F)}(X)=1|X=\mathbf{x}_i)}^{\text{\makebox[0pt]{$\mathbf{s}^{(F)}[i]=\mathbf{s}^{*}[i]-(\mathbf{s}^{*}[i]-\mathbf{s}^{(F)}[i])=\mathbf{s}^{*}[i]-|\mathbf{s}^{*}[i]-\mathbf{s}^{(F)}[i]|=\mathbf{s}^{*}[i]-\mathbf{m}[i]$ since $i\in \mathcal{I}^+$.}}}\underbrace{P(Y=1,X=\mathbf{x}_i|A=a)}_{\substack{\text{\makebox[0pt]{$=\mathbf{p}^{1}_a[i]$ by definition.}} \\ \text{\makebox[0pt]{Similar for other substitutions.}}}} \notag
\end{align}
\begin{align}
    &\ \ \ \ \ \ \ \ \ + \sum_{i\in \mathcal{I}^+}\overbrace{P(\hat{Y}^{(F)}(X)=0|X=\mathbf{x}_i)}^{\text{\makebox[0pt]{ $1-\mathbf{s}^{(F)}[i]=1-\mathbf{s}^{*}[i]+(\mathbf{s}^{*}[i]-\mathbf{s}^{(F)}[i])=1-\mathbf{s}^{*}[i]+|\mathbf{s}^{*}[i]-\mathbf{s}^{(F)}[i]|=1-\mathbf{s}^{*}[i]+\mathbf{m}[i]$ since $i\in \mathcal{I}^+$}. }}P(Y=0,X=\mathbf{x}_i|A=a)\notag\\
    &\ \ \ \ \ \ \ \ +\sum_{i\in \mathcal{I}^-}\overbrace{P(\hat{Y}^{(F)}(X)=1|X=\mathbf{x}_i)}^{\text{\makebox[0pt]{$\mathbf{s}^{(F)}[i]=\mathbf{s}^{*}[i]-(\mathbf{s}^{*}[i]-\mathbf{s}^{(F)}[i])=\mathbf{s}^{*}[i]+|\mathbf{s}^{*}[i]-\mathbf{s}^{(F)}[i]|=\mathbf{s}^{*}[i]-\mathbf{m}[i]$ since $i\in \mathcal{I}^-$.}}}P(Y=1,X=\mathbf{x}_i|A=a) \notag\\
    &\ \ \ \ \ \ \ \ \ + \sum_{i\in \mathcal{I}^-}\overbrace{P(\hat{Y}^{(F)}(X)=0|X=\mathbf{x}_i)}^{\text{\makebox[0pt]{$1-\mathbf{s}^{(F)}[i]=1-\mathbf{s}^{*}[i]+(\mathbf{s}^{*}[i]-\mathbf{s}^{(F)}[i])=1-\mathbf{s}^{*}[i]-|\mathbf{s}^{*}[i]-\mathbf{s}^{(F)}[i]|=1-\mathbf{s}^{*}[i]+\mathbf{m}[i]$ since $i\in \mathcal{I}^-$}.}}P(Y=0,X=\mathbf{x}_i|A=a)\notag\\
    & =(\mathbf{s}^{*}-\mathbf{m})^T\mathbf{p}^1_a +(\mathbf{1}_{N}-\mathbf{s}^{*}+\mathbf{m})^T\mathbf{p}^0_a.\notag
\end{align}
Through similar derivation, we obtain $P(\hat{Y}^{(F)}(X)=Y|A=b) = (\mathbf{s}^{*}-\mathbf{m})^T\mathbf{p}^1_b +(\mathbf{1}_{N}-\mathbf{s}^{*}+\mathbf{m})^T\mathbf{p}^0_b$. Hence, 
\begin{align}
    |P(\hat{Y}^{(F)}(X)=Y|A=a)-P(\hat{Y}^{(F)}(X)=Y|A=b)|\leq\epsilon_{EA} \notag
\end{align}
reduces to 
\begin{align}
|(\mathbf{s}^{*}-\mathbf{m})^T(\mathbf{p}^1_a-\mathbf{p}^1_b) +(\mathbf{1}_{N}-\mathbf{s}^{*}+\mathbf{m})^T(\mathbf{p}^0_a-\mathbf{p}^0_b)|\leq\epsilon_{EA} \notag
\end{align}
or
\begin{align}
|(\mathbf{p}_{a}^{1}-\mathbf{p}_{b}^{1})^T(\mathbf{s}^{*}-\mathbf{m})+(\mathbf{p}_{a}^{0}-\mathbf{p}_{b}^{0})^T(\mathbf{1}_{N}-\mathbf{s}^{*}+\mathbf{m})|\leq \epsilon_{EA}. \notag
\end{align}



\section{Derivation of Minimization Problem (5)}
\label{trans_deriv}
Let $S^{(F)}:\mathcal{X}\rightarrow[0,1]$ be the optimal fair randomized scoring function whose output represents the probability of producing a $1$ label for a given feature vector, and let:
\begin{equation}
    \hat{Y}^{(F)}(\mathbf{x})=\begin{cases}
        1&\text{, w.p. } S^{(F)}(\mathbf{x})\\
        0&\text{, w.p. } 1-S^{(F)}(\mathbf{x})
    \end{cases}\notag
\end{equation}
be the optimal fair classifier for the original non-transformed space of feature vectors. In this section, we derive the objective function and fairness constraints used in minimization problem (5). Let $T$ be the random variable produced by applying the decorrelation transformation to $X$. We will first show that the $Accuracy$ term in minimization problem (5) is equal to $P(\hat{Y}^{(F)}(T)=Y)$, which means that minimizing the negative of this term is equivalent to maximizing the fair Bayesian classifier's accuracy on the space of transformed feature vectors. The $Correlation$ term in minimization problem (5) is self-explanatory, and thus no derivation is required. We will derive each of the fairness constraints separately by showing that they are equivalent to the following list of constraints (again, equalized odds is omitted since it is simply a combination or predictive equality and equal opportunity):
\begin{align}
    & |P(\hat{Y}^{(F)}(T)=1|A=a)-P(\hat{Y}^{(F)}(T)=1|A=b)|\leq\epsilon_{DP} \notag \\
    & |P(\hat{Y}^{(F)}(T)=1|A=a,Y=1)-P(\hat{Y}^{(F)}(T)=1|A=b,Y=1)|\leq\epsilon_{EOp} \notag \\
    & |P(\hat{Y}^{(F)}(T)=1|A=a,Y=0)-P(\hat{Y}^{(F)}(T)=1|A=b,Y=0)|\leq\epsilon_{PE} \notag \\
    & |P(\hat{Y}^{(F)}(T)=Y|A=a)-P(\hat{Y}^{(F)}(T)=Y|A=b)|\leq\epsilon_{EA} \notag\\
    &e^{-\theta d^2_{\mathcal{X}}(\mathbf{x}_i,\mathbf{x}_j)}|t^{(F)}[i]-t^{(F)}[j]|\leq \epsilon_{IF}, \ \ \ \ \ \ \ \ \forall i,j \ \ \text{  s.t. } \ \  d_{\mathcal{X}}(\mathbf{x}_i,\mathbf{x}_j)\leq \eta, \notag
\end{align}
where for the individual fairness constraint, $t[i]$ represents the score associated with feature vectors from the original $i^{th}$ VQ cell \textit{after} the decorrelation transformation has been applied. The $Fairness$ constraint in minimization problem $(5)$ simply structures all of these constraints in one block matrix form.
\subsection{Derivation of the $Accuracy$ term}
Observe that:
\begin{align}
    &P(\hat{Y}^{(F)}(T)=Y) = P(\hat{Y}^{(F)}(T)=1,Y=1) + P(\hat{Y}^{(F)}(T)=0,Y=0)\notag\\
    &=\sum_{i=1}^{N}\{\underbrace{P(\hat{Y}^{(F)}(T)=1,Y=1,T=\mathbf{x}_i)}_{(A)} + \underbrace{P(\hat{Y}^{(F)}(T)=0,Y=0,T=\mathbf{x}_i)}_{(B)}\}
\end{align}
The derivations to show that terms $(A)$ and $(B)$ are respectively equal $\mathbf{s}^{(F)T}\mathbf{T}\mathbf{p}^{1}$ and $(\mathbf{1}_{N}-\mathbf{s}^{(F)})^T\mathbf{T}\mathbf{p}^{0}$ are similar. Thus, for brevity, we only provide the former derivation.
\begin{align}
    &P(\hat{Y}^{(F)}(T)=1,Y=1,T=\mathbf{x}_i)\notag\\
    &= \overbrace{P(\hat{Y}^{(F)}(T)=1|T=\mathbf{x}_i,\underbrace{\cancel{Y=1}}_{\text{\makebox[0pt]{Conditional independence given $T$.}}})}^{\mathbf{s}^{(F)}[i] \text{ by definition.}}P(T=\mathbf{x}_i,Y=1) \notag\\
\end{align}
Next, note that
\begin{align}
    &P(T=\mathbf{x}_i,Y=1)=\sum_{k=1}^{N}P(T=\mathbf{x}_i,Y=1, X=\mathbf{x}_k) \notag\\
    &=\sum_{k=1}^{N}\underbrace{P(T=\mathbf{x}_i| X=\mathbf{x}_k,\underbrace{\cancel{Y=1}}_{\text{\makebox[0pt]{Conditional independence given $X$.}}}}_{\mathbf{T}[i,k]})\underbrace{P(Y=1, X=\mathbf{x}_k)}_{\mathbf{p}^1[k]}
    =\sum_{k=1}^{N}\mathbf{T}[i,k]\mathbf{p}^1[k].
\end{align}
Plugging (18) into (17), we have that $(A)$ is given by:
\begin{align}
    P(\hat{Y}^{(F)}(T)=1,Y=1,T=\mathbf{x}_i) = \mathbf{s}^{(F)}[i]\sum_{k=1}^{N}\mathbf{T}[i,k]\mathbf{p}^1[k].
\end{align}
Through a similar derivation, coupled with the fact that $P(\hat{Y}^{(F)}(T)=0|T=\mathbf{x}_i,Y=0) = 1-\mathbf{s}^{(F)}[i]$, we can find that $(B)$ is given by:
\begin{align}
    P(\hat{Y}^{(F)}(T)=0,Y=0,T=\mathbf{x}_i) = (1-\mathbf{s}^{(F)}[i])\sum_{k=1}^{N}\mathbf{T}[i,k]\mathbf{p}^0[k].
\end{align}
Finally, substituting $(19)$ and $(20)$ into $(16)$, we obtain that
\begin{align}
    &P(\hat{Y}^{(F)}(T)=Y) =\sum_{i=1}^{N} \{\mathbf{s}^{(F)}[i]\sum_{k=1}^{N}\mathbf{T}[i,k]\mathbf{p}^1[k] +  (1-\mathbf{s}^{(F)}[i])\sum_{k=1}^{N}\mathbf{T}[i,k]\mathbf{p}^0[k]\}\notag\\
    &=\sum_{i=1}^{N}\sum_{k=1}^{N}\mathbf{T}[i,k]\mathbf{s}^{(F)}[i]\mathbf{p}^1[k] + \sum_{i=1}^{N}\sum_{k=1}^{N}\mathbf{T}[i,k](1-\mathbf{s}^{(F)}[i])\mathbf{p}^0[k]\notag\\
    &=  \mathbf{s}^{(F)T}\mathbf{T}\mathbf{p}^{1} +(\mathbf{1}_{N}-\mathbf{s}^{(F)})^T\mathbf{T}\mathbf{p}^{0}=Acc^{(d)}.\notag
\end{align}

\subsection{Demographic Parity:} Observe that:
\begin{align}
    &P(\hat{Y}^{(F)}(T)=1|A=a) = \sum_{i=1}^{N}P(\hat{Y}^{(F)}(T)=1,T=\mathbf{x}_i|A=a) \notag\\
    &= \sum_{i=1}^{N}\overbrace{P(\hat{Y}^{(F)}(T)=1|T=\mathbf{x}_i,\underbrace{\cancel{A=a}}_{\text{\makebox[0pt]{Conditional independence given $T$.}}})}^{\mathbf{s}^{(F)}[i] \text{ by definition.}}P(T=\mathbf{x}_i|A=a)
\end{align}
Next, note that:
\begin{align}
    &P(T=\mathbf{x}_i|A=a) = \sum_{k=1}^{N}P(T=\mathbf{x}_i, X=\mathbf{x}_k|A=a)\notag
\end{align}
\begin{align}
    &=\sum_{k=1}^{N}P(T=\mathbf{x}_i|X=\mathbf{x}_k,\underbrace{\cancel{A=a}}_{\text{\makebox[0pt]{Conditional independence given $X$.}}})P(X=\mathbf{x}_k|A=a)=\sum_{k=1}^{N}\mathbf{T}[i,k]\mathbf{p}_a[k].
\end{align}
Plugging (22) into (21), we obtain:
\begin{align}
    &P(\hat{Y}^{(F)}(T)=1|A=a) = \sum_{k=1}^{N} \mathbf{s}^{(F)}[i]\sum_{k=1}^{N}\mathbf{T}[i,k]\mathbf{p}_a[k] = \sum_{k=1}^{N} \sum_{k=1}^{N}\mathbf{T}[i,k]\mathbf{s}^{(F)}[i]\mathbf{p}_a[k] = \mathbf{p}_a^T\mathbf{T}\mathbf{s}^{(F)}. \notag
\end{align}
A similar derivation yields that $P(\hat{Y}^{(F)}(T)=1|A=b) = \mathbf{p}_b^T\mathbf{T}\mathbf{s}^{(F)}$. Thus, 
\begin{align}
    |P(\hat{Y}^{(F)}(T)=1|A=a)-P(\hat{Y}^{(F)}(T)=1|A=b)|\leq\epsilon_{DP} \notag
\end{align}
is equivalent to 
\begin{align}
    |(\mathbf{p}_a-\mathbf{p}_b)^T\mathbf{T}\mathbf{s}^{(F)}|\leq \epsilon_{DP}. \notag
\end{align}

\subsection{Equal Opportunity:} Observe that:
\begin{align}
    &P(\hat{Y}^{(F)}(T)=1|A=a,Y=1) = \sum_{i=1}^{N}P(\hat{Y}^{(F)}(T)=1,T=\mathbf{x}_i|A=a,Y=1) \notag\\
    &= \sum_{i=1}^{N}\overbrace{P(\hat{Y}^{(F)}(T)=1|T=\mathbf{x}_i,\underbrace{\cancel{A=a,Y=1}}_{\text{\makebox[0pt]{conditional independence}}})}^{\mathbf{s}^{(F)}[i] \text{ by definition.}}P(T=\mathbf{x}_i|A=a,Y=1).
\end{align}
Next, note that:
\begin{align}
    &P(T=\mathbf{x}_i|A=a,Y=1) = \sum_{k=1}^{N}P(T=\mathbf{x}_i, X=\mathbf{x}_k|A=a)\notag\\
    &=\sum_{k=1}^{N}P(T=\mathbf{x}_i|X=\mathbf{x}_k,\underbrace{\cancel{A=a,Y=1}}_{\text{\makebox[0pt]{conditional independence}}})P(X=\mathbf{x}_k|A=a,Y=1)=\sum_{k=1}^{N}\mathbf{T}[i,k]\mathbf{p}_{a,1}[k].
\end{align}

Plugging (24) into (23), we obtain:
\begin{align}
    &P(\hat{Y}^{(F)}(T)=1|A=a) = \sum_{k=1}^{N} \mathbf{s}^{(F)}[i]\sum_{k=1}^{N}\mathbf{T}[i,k]\mathbf{p}_{a,1}[k] = \sum_{k=1}^{N} \sum_{k=1}^{N}\mathbf{T}[i,k]\mathbf{s}^{(F)}[i]\mathbf{p}_{a,1}[k] = \mathbf{p}_{a,1}^T\mathbf{T}\mathbf{s}^{(F)}. \notag
\end{align}
Through similar derivation, we can find that $P(\hat{Y}^{(F)}(T)=1|A=b,Y=1) = \mathbf{p}_{b,1}^T\mathbf{T}\mathbf{s}^{(F)}$. Thus, 
\begin{align}
    |P(\hat{Y}^{(F)}(T)=1|A=a,Y=1)-P(\hat{Y}^{(F)}(T)=1|A=b,Y=1)|\leq\epsilon_{EOp} \notag
\end{align}
is equivalent to 
\begin{align}
    |(\mathbf{p}_{a,1}-\mathbf{p}_{b,1})^T\mathbf{T}\mathbf{s}^{(F)}|\leq \epsilon_{EOp}. \notag
\end{align}

\subsection{Predictive Equality}
Observing that $P(\hat{Y}^{(F)}(T)=0|T=\mathbf{x}_i) = 1-\mathbf{s}^{(F)}[i],$ we simply replace this in the Equal Opportunity derivation to obtain:
\begin{align}
    |(\mathbf{p}_{a,0}-\mathbf{p}_{b,0})^T\mathbf{T}(\mathbf{1}_{N}-\mathbf{s}^{(F)})|\leq \epsilon_{EOp}. \notag
\end{align}

\subsection{Equal Accuracy:} Observe that:
\begin{align}
    &P(\hat{Y}^{(F)}(T)=Y|A=a) = P(\hat{Y}^{(F)}(T)=1,Y=1|A=a) + P(\hat{Y}^{(F)}(T)=0,Y=0|A=a)\notag\\
    &=\sum_{i=1}^{N}\underbrace{P(\hat{Y}^{(F)}(T)=1,Y=1,T=\mathbf{x}_i|A=a)}_{(A)} + \underbrace{P(\hat{Y}^{(F)}(T)=0,Y=0,T=\mathbf{x}_i|A=a)}_{(B)}
\end{align}
The derivations for $(A)$ and $(B)$ are similar, so we focus on deriving $(A)$ here.
\begin{align}
    & P(\hat{Y}^{(F)}(T)=1,Y=1,T=\mathbf{x}_i|A=a) =  \notag \\
    &=\overbrace{P(\hat{Y}^{(F)}(T)=1|T=\mathbf{x}_i,\underbrace{\cancel{A=a,Y=1}}_{\text{\makebox[0pt]{conditional independence}}})}^{\mathbf{s}^{(F)}[i] \text{ by definition.}}P(T=\mathbf{x}_i,Y=1|A=a)
\end{align}
Next, note that
\begin{align}
    &P(T=\mathbf{x}_i,Y=1|A=a)=\sum_{k=1}^{N}P(T=\mathbf{x}_i,Y=1, X=\mathbf{x}_k|A=a) \notag\\
    &=\sum_{k=1}^{N}\underbrace{P(T=\mathbf{x}_i| X=\mathbf{x}_k,\underbrace{\cancel{Y=1,A=a}}_{\text{\makebox[0pt]{conditional independence}}}}_{\mathbf{T}[i,k]})\underbrace{P(Y=1, X=\mathbf{x}_k|A=a)}_{\mathbf{p}_a^1[k]}
    =\sum_{k=1}^{N}\mathbf{T}[i,k]\mathbf{p}_a^1[k].
\end{align}
Plugging (27) into (26), we have that $(A)$ is given by:
\begin{align}
    P(\hat{Y}^{(F)}(T)=1,Y=1,T=\mathbf{x}_i|A=a) = \mathbf{s}^{(F)}[i]\sum_{k=1}^{N}\mathbf{T}[i,k]\mathbf{p}_a^1[k].
\end{align}
Through a similar derivation, coupled with the fact that $P(\hat{Y}^{(F)}(T)=0|T=\mathbf{x}_i,A=a,Y=0) = 1-\mathbf{s}^{(F)}[i]$, we can find that $(B)$ is given by:
\begin{align}
    P(\hat{Y}^{(F)}(T)=0,Y=0,T=\mathbf{x}_i|A=a) = (1-\mathbf{s}^{(F)}[i])\sum_{k=1}^{N}\mathbf{T}[i,k]\mathbf{p}_a^0[k].
\end{align}
Finally, substituting $(28)$ and $(29)$ into $(25)$, we obtain that
\begin{align}
    &P(\hat{Y}^{(F)}(T)=Y|A=a) =\sum_{i=1}^{N} \mathbf{s}^{(F)}[i]\sum_{k=1}^{N}\mathbf{T}[i,k]\mathbf{p}_a^1[k] +  (1-\mathbf{s}^{(F)}[i])\sum_{k=1}^{N}\mathbf{T}[i,k]\mathbf{p}_a^0[k]\notag\\
    &=\sum_{i=1}^{N}\sum_{k=1}^{N}\mathbf{T}[i,k]\mathbf{s}^{(F)}[i]\mathbf{p}_a^1[k] + \sum_{i=1}^{N}\sum_{k=1}^{N}\mathbf{T}[i,k](1-\mathbf{s}^{(F)}[i])\mathbf{p}_a^0[k]\notag\\
    &=  \mathbf{p}_{a}^{1T}\mathbf{T}\mathbf{s}^{(F)} +  \mathbf{p}_{a}^{0T}\mathbf{T}(\mathbf{1}_{N}-\mathbf{s}^{(F)}).\notag
\end{align}
Through similar derivation, we obtain $P(\hat{Y}^{(F)}(T)=Y|A=b)=\mathbf{p}_{b}^{1T}\mathbf{T}\mathbf{s}^{(F)} +  \mathbf{p}_{b}^{0T}\mathbf{T}(\mathbf{1}_{N}-\mathbf{s}^{(F)}).$ Thus,
\begin{align}
    |P(\hat{Y}^{(F)}(T)=Y|A=a)-P(\hat{Y}^{(F)}(T)=Y|A=b)|\leq\epsilon_{EA}\notag
\end{align}
is equivalent to 
\begin{align}
    |(\mathbf{p}_{a}^{1}-\mathbf{p}_{b}^{1})^T\mathbf{T}\mathbf{s}^{(F)} +  (\mathbf{p}_{a}^{0}-\mathbf{p}_{b}^{0})^T\mathbf{T}(\mathbf{1}_{N}-\mathbf{s}^{(F)})|\leq\epsilon_{EA}\notag
\end{align}
\subsection{Local Individual Fairness:} Let $i$ and $j$ represent the arbitrary $n^{th}$ pair of cell centroids that satisfy $d_{\mathcal{X}}(\mathbf{x}_i,\mathbf{x}_j)\leq \eta$. Observe that the probability that a feature vector, $\mathbf{x}_i$, is classified as $1$ by $S^{(F)}$ after it has been transformed by $T$ is given by:
\begin{align}
    t[i] = E[S^{(F)}(T)|X=\mathbf{x}_i]=\sum_{k=1}^{N}S^{(F)}(\mathbf{x}_k)P(T=\mathbf{x}_k|X=\mathbf{x}_i)
    =\sum_{k=1}^{N}\mathbf{s}^{(F)}[k]\mathbf{T}[k,i].\notag
\end{align}
Similarly, $t[j]=\sum_{k=1}^{N}\mathbf{s}^{(F)}[k]\mathbf{T}[k,j].$ Now, let $\mathbf{w}_{n}$ be a vector in which every entry is zero, except for the $i^{th}$ and $j^{th}$ entries, which contain values of $e^{-\theta d^2_{\mathcal{X}}(\mathbf{x}_i,\mathbf{x}_j)}$ and $-e^{-\theta d^2_{\mathcal{X}}(\mathbf{x}_i,\mathbf{x}_j)}$, respectively. Then,
\begin{align}
    &e^{-\theta d^2_{\mathcal{X}}(\mathbf{x}_i,\mathbf{x}_j)}\Big|t^{(F)}[i]-t^{(F)}[j]\Big| = e^{-\theta d^2_{\mathcal{X}}(\mathbf{x}_i,\mathbf{x}_j)}\Big|\sum_{k=1}^{N}\mathbf{s}^{(F)}[k](\mathbf{T}[k,i]-\mathbf{T}[k,j])\Big| \notag\\
    &\Big|\sum_{k=1}^{N}\Big(\mathbf{s}^{(F)}[k](\mathbf{T}[k,i]e^{-\theta d^2_{\mathcal{X}}(\mathbf{x}_i,\mathbf{x}_j)}-\mathbf{T}[k,j]e^{-\theta d^2_{\mathcal{X}}(\mathbf{x}_i,\mathbf{x}_j)}\Big)\Big|=\Big|\sum_{l=1}^{N}\sum_{k=1}^{N}\mathbf{s}^{(F)}[k]\mathbf{T}[k,i]\mathbf{w}_{n}[l]\Big| \notag\\
    &=\Big|\mathbf{w}_n^T\mathbf{T}\mathbf{s}^{(F)}|.\notag
\end{align}
Thus,
\begin{align}
    e^{-\theta d_{\mathcal{X}}(\mathbf{x}_i,\mathbf{x}_j)}|t^{(F)}[i]-t^{(F)}[j]|\leq \epsilon_{IF}\notag
\end{align}
is equivalent to:
\begin{align}
    \Big|\mathbf{w}_n^T\mathbf{T}\mathbf{s}^{(F)}|\leq \epsilon_{IF}.\notag
\end{align}
To enforce this constraint for all $i$ and $j$, such that $d_{\mathcal{X}}(\mathbf{x}_i,\mathbf{x}_j)\leq \eta$, simply stack all such $\mathbf{w}_n^T$ into a matrix $\mathbf{W}$. The associated constraint is then given by:
\begin{align}
    \Big|\mathbf{W}\mathbf{T}\mathbf{s}^{(F)}|\leq \mathbf{1}_{N}\epsilon_{IF}.\notag
\end{align}



\section{Proof of Claim 1}
\label{claim1}


We will now show that the following minimization problem is convex:
\begin{align}
\max_{\bm{\rho}}\min_{\mathbf{T}\in\mathcal{P}} 
    &-\lambda(\mathbf{s}^{(F)T}\mathbf{T}\mathbf{p}^{1} + (\mathbf{1}_{N}-\mathbf{s}^{(F)})^T\mathbf{T}\mathbf{p}^{0}) + \beta\|\mathbf{T}(\mathbf{p}_a-\mathbf{p}_b)\|_1 \notag \\
    & + \langle\bm{\rho},\max(\tilde{f}(\mathbf{T})-\mathbf{\tilde{f}}, \mathbf{0}_{2B+8})\rangle + \frac{\tau}{2}\|\max(\tilde{f}(\mathbf{T})-\mathbf{\tilde{f}}, \mathbf{0}_{2B+8})\|_2^2.\notag
\end{align}
\begin{proof}
Observing that the sum of convex functions is convex, it suffices to prove that this minimization problem is convex by showing that every term in the problem is convex in $\mathbf{T}$, which we will now do. $-\lambda(\mathbf{s}^{(F)T}\mathbf{T}\mathbf{p}^{1} + (\mathbf{1}_{N}-\mathbf{s}^{(F)})^T\mathbf{T}\mathbf{p}^{0})$ is linear in $\mathbf{T}$, and thus convex. $\beta\|\mathbf{T}(\mathbf{p}_a-\mathbf{p}_b)\|_1$ is convex since the composition of a convex function with an affine function is convex. 

To show that the final terms in the minimization problem are convex, we begin by showing that $g(\mathbf{T})=\max(\tilde{f}(\mathbf{T})-\mathbf{\tilde{f}}, \mathbf{0}_{2B+8})$ is elementwise convex using the definition of convexity. Specifically, for $\theta\in[0,1]$ and $\mathbf{T}_1,\mathbf{T}_2\in\mathcal{P}$,  observe that:
\begin{align}
    &g(\theta \mathbf{T}_1 + (1-\theta)\mathbf{T}_2 ) = \max(\tilde{f}(\theta \mathbf{T}_1 + (1-\theta)\mathbf{T}_2)-\mathbf{\tilde{f}}, \ \ \mathbf{0}_{2B+8}) \notag\\
    &=\max(\underbrace{\mathbf{\tilde{P}}}_{\begin{bmatrix}-\mathbf{P} \\ \mathbf{P}\end{bmatrix}}\bigl[\mathbf{M}\circ\bigl( \mathbf{\tilde{I}} (\theta\mathbf{T}_1 + (1-\theta)\mathbf{T}_2)\mathbf{\tilde{I}}^T \bigr) \bigr]\mathbf{s}^{(F)}-\mathbf{\tilde{f}}, \ \ \mathbf{0}_{2B+8}) \notag\\
    &=\max(\theta\underbrace{\{\mathbf{\tilde{P}}\bigl[\mathbf{M}\circ\bigl( \mathbf{\tilde{I}} \mathbf{T}_1 \mathbf{\tilde{I}}^T \bigr) \bigr]\mathbf{s}^{(F)}-\mathbf{\tilde{f}}\}}_{\tilde{f}(\mathbf{T}_1)} + (1-\theta)\underbrace{\{\mathbf{\tilde{P}}\bigl[\mathbf{M}\circ\bigl( \mathbf{\tilde{I}} \mathbf{T}_2 \mathbf{\tilde{I}}^T \bigr) \bigr]\mathbf{s}^{(F)}-\mathbf{\tilde{f}}\} }_{\tilde{f}(\mathbf{T}_2)}, \ \ \mathbf{0}_{2B+8}) \notag\\
    &=\max(\theta\tilde{f}(\mathbf{T}_1) + (1-\theta)\tilde{f}(\mathbf{T}_2), \ \ \mathbf{0}_{2B+8}) \notag\\
    &\underbrace{\leq}_{\substack{\text{\makebox[0pt]{Elementwise by convexity}} \\ \text{\makebox[0pt]{of pointwise max.}}}} \max(\theta\tilde{f}(\mathbf{T}_1), \ \ \mathbf{0}_{2B+8}) + \max((1-\theta)\tilde{f}(\mathbf{T}_2), \ \ \mathbf{0}_{2B+8}) \notag\\
    &=\theta\max(\tilde{f}(\mathbf{T}_1), \ \ \mathbf{0}_{2B+8}) + (1-\theta)\max(\tilde{f}(\mathbf{T}_2), \ \ \mathbf{0}_{2B+8}) \notag\\
    &=\theta g(\mathbf{T}_1) + (1-\theta)g(\mathbf{T}_2) \notag
\end{align}
Thus, the inner product between $\bm{\rho}$ and $g(\mathbf{T})$ is simply a linear combination of convex functions, making the second to last term in the optimization problem convex. Finally, the convexity of the last term in the optimization problem follows from the fact that $g(\mathbf{T})$ is a non-negative convex function and the norm is convex and non-decreasing over the set $\mathbb{R}_{\geq 0}$.
\end{proof}

\section{Fairness-Accuracy Trade-off (Sensitive Aware)}
\label{sec_fair_aware}

In Section \ref{fairness}, the minimization problem for analyzing the trade-off between accuracy and fairness was formulated under the assumption that the sensitive attribute is unavailable. In this section, we adapt this formulation for the situation in which the sensitive attribute is allowed to be used by the unconstrained and fair Bayesian classifiers to make its decisions. With this information, the solution of the Bayesian classifier is given by $\mathbf{s}^{*}=\begin{bmatrix} \mathbf{s}_a^{*T}, & \mathbf{s}_b^{*T}\end{bmatrix}^T$, where
\begin{equation}
    \mathbf{s}_a^{*}[i] = \argmax_{y}\mathbf{p}_a^{y}[i] \ \ \ \ \ \ \ \ \ \ \text{and} \ \ \ \ \ \ \ \ \mathbf{s}_b^{*}[i] = \argmax_{y}\mathbf{p}_b^{y}[i], \forall i,\notag
\end{equation}
which again produces binary decisions. The accuracy of this classifier is given by $Acc^*=\sum_{g\in\mathcal{A}}\sum_{i=1}^{N}\mathbf{p}^{\mathbf{s}_g^{*}[i]}[i]$. Constructing the minimization problem for finding the  optimal fair Bayesian classifier's decisions is extended from the minimization problem presented in Section \ref{fairness} by separating its decisions by group. Towards this end, we let $\mathbf{m}_g = \mathbf{s}_g^{*}-\mathbf{s}_g^{(F)}, g\in\mathcal{A},$ and $\mathcal{I}^+_g$ and $\mathcal{I}^-_g$, be the set of indices associated with the positive and negative elements of $\mathbf{p}^{1,g}-\mathbf{p}^{0,g}$, respectively. Then, the minimization problem is given by:
\begin{align}
    &\min_{\{\mathbf{m}_g\}}\sum_{g\in\mathcal{A}}(\mathbf{p}^{1,g}-\mathbf{p}^{0,g})^T\mathbf{m}_g, \notag\\
    & |\mathbf{p}_{a}^T(\mathbf{s}_a^{*}-\mathbf{m}_a)-\mathbf{p}_b^T(\mathbf{s}_b^{*}-\mathbf{m}_b)|\leq \epsilon_{DP} \ \ \ \ \ \ \ \ \ \ \ \ \ \ \ \ \ \ \ \ \ \ \ \ \ \ \ \ \ \ \ (DP)\notag\\
    &|\mathbf{p}_{a,0}^T(\mathbf{s}_a^{*}-\mathbf{m}_a)-\mathbf{p}_{b,0}^T(\mathbf{s}_b^{*}-\mathbf{m}_b)|\leq \epsilon_{PE} \ \ \ \ \ \ \ \ \ \ \ \ \ \ \ \ \ \ \ \ \ \ \ \ \ \ \ \ (PE) \notag\\
    &|\mathbf{p}_{a,1}^T(\mathbf{s}_a^{*}-\mathbf{m}_a)-\mathbf{p}_{b,1}^T(\mathbf{s}_b^{*}-\mathbf{m}_b)|\leq \epsilon_{EOp} \ \ \ \ \ \ \ \ \ \ \ \ \ \ \ \ \ \ \ \ \ \ \ \ \ \ (EOp)\notag\\
    &\epsilon_{EOp} = \epsilon_{PE} \ \ \ \ \ \ \ \ \ \ \ \ \ \ \ \ \ \ \ \ \ \ \ \ \ \ \ \ \ \ \ \ \ \ \ \ \ \ \ \ \ \ \ \ \ \ \ \ \ \ \ \ \ \ \ \ \ \ \ \ \ \ \ \ \ \ \ \ \ \ \ \ \ (EOd)\notag\\
    &|\mathbf{p}_{a}^{1T}(\mathbf{s}_a^{*}-\mathbf{m}_a)-\mathbf{p}_{b}^{1T}(\mathbf{s}_b^{*}-\mathbf{m}_b)+ \notag\\
    &\mathbf{p}_{a}^{0T}(\mathbf{1}_{N}-\mathbf{s}_a^{*}+\mathbf{m}_a)-\mathbf{p}_{b}^{0T}(\mathbf{1}_{N}-\mathbf{s}_b^{*}+\mathbf{m}_b)|\leq \epsilon_{EA} \ \ \ \ \ \ \ \ \ \ \  (EA)\notag\\
    & |\mathbf{W}\sum_{g\in\mathcal{A}}\text{diag}(\mathbf{p}_g)(\mathbf{s}_g^{*}-\mathbf{m}_g)|\leq \epsilon_{IF}\mathbf{1}_{N}, \ \ \ \ \ \ \ \ \ \ \ \ \ \ \ \ \ \ \ \ \ \ \ \ \ \ \ \ \ \  (Ind) \notag \\
    &0\leq \mathbf{m}_g[i]\leq 1,  \ \ \ \ \ i \in \mathcal{I}^+_g, \ \ \ \ \ g\in\mathcal{A}\notag\\
    -&1\leq \mathbf{m}_g[i]\leq 0, \ \ \ \ i \in \mathcal{I}^-_g, \ \ \ \ \ g\in\mathcal{A},
\end{align}
where $\mathbf{W}$ is defined as in equation (3). The derivation of each of these terms is similar to the derivation provided in Appendix \ref{app_LP_const}, with the added step of expanding each term using group affiliation information.

\section{Transfer Fairness to Decorrelated Domain (Sensitive Aware)}
\label{sec_priv_aware}

In Section \ref{fair_priv} the minimization problem for transferring fairness to a decorrelated domain was formulated under the assumption of unawareness of the sensitive attribute. Availability of the sensitive attribute provides more flexibility in construction of the mapping, allowing us to transform the feature vectors belonging to each group using group-specific transformations. Specifically, given a fixed fair Bayesian Classifier's decision map, $\mathbf{s}^{(F)} = \begin{bmatrix} \mathbf{s}_a^{(F)T}, & \mathbf{s}_b^{(F)T} \end{bmatrix}^T,$ where the first $N$ entries represent the scores associated with Group $a$ and the remaining $N$ entries represent the scores for Group $b$, we aim to optimize for two matrices, $\mathbf{T}_a,\mathbf{T}_b\in\mathcal{P},$ using the following optimization problem:
\begin{align}
    \min_{\mathbf{T}_a,\mathbf{T}_b\in\mathcal{P}} 
    &-\lambda\sum_{g\in\mathcal{A}}(\mathbf{s}^{(F)T}\mathbf{T}_g\mathbf{p}^{1,g} + (\mathbf{1}_{N}-\mathbf{s}^{(F)})^T\mathbf{T}_g\mathbf{p}^{0,g}) \notag\\
    &+ \beta\|\mathbf{T}_a\mathbf{p}_a-\mathbf{T_b}\mathbf{p}_b\|_1 \notag\\
    &  \text{s.t. } \ \ \ |f(\mathbf{T}_a,\mathbf{T}_b)|\leq \mathbf{F} \ \ \ \  \textrm{\it (Fairness condition)}.
\end{align}
The derivation of each of these terms is similar to the derivation provided in Appendix \ref{trans_deriv}, with the added step of expanding each term using group affiliation information. The terms in the objective function and the $Fairness$ constraint for this optimization problem still have the same interpretation as in the situation of unawareness of the sensitive attribute. Note that in this case, the row dimension of the matrices, $\mathbf{T}_a$ and $\mathbf{T}_b$, are twice that of the matrix, $\mathbf{T}$, from minimization problem (5).
This is because, each VQ cell effectively has two bins of information associated with it (one for each group in the original space of feature vectors). Nevertheless, optimization over these matrices ensures that in the decorrelated domain, all VQ cells are conditionally independent of the sensitive attribute. 
The function $f(\mathbf{T}_a,\mathbf{T}_b)\in\mathbb{R}^{1\times(B+4)}$ is now given by the following equation:
\begin{align}
    f(\mathbf{T}_a,\mathbf{T}_b) &= \underbrace{\begin{bmatrix} \mathbf{s}^{(F)} & (\mathbf{1}_{N}-\mathbf{s}^{(F)})
    \end{bmatrix} }_{\mathbf{\tilde{s}}^{(F)T}}\Biggl(
    \underbrace{\begin{bmatrix}\mathbf{T}_a & \mathbf{0}_{2N,N}\\
    \mathbf{0}_{2N,N} & \mathbf{T}_a
    \end{bmatrix}}_{\mathbf{\tilde{T}}_a} \underbrace{\begin{bmatrix}\mathbf{p}_a & \mathbf{p}_{a,1} & \mathbf{0}_{N} & \mathbf{p}_a^{1} & \mathbf{W}_a^T \\ \mathbf{0}_{N} & \mathbf{0}_{N} & \mathbf{p}_{a,0} & \mathbf{p}_a^{0} & \mathbf{0}_{B,N}
    \end{bmatrix}}_{\mathbf{P}_a} \notag\\
    &-\underbrace{\begin{bmatrix}\mathbf{T}_b & \mathbf{0}_{2N,N}\\
    \mathbf{0}_{2N,N} & \mathbf{T}_b
    \end{bmatrix}}_{\mathbf{\tilde{T}}_b} \underbrace{\begin{bmatrix}\mathbf{p}_b & \mathbf{p}_{b,1} & \mathbf{0}_{N} & \mathbf{p}_b^{1} & -\mathbf{W}_b^T \\ \mathbf{0}_{N} & \mathbf{0}_{N} & \mathbf{p}_{b,0} & \mathbf{p}_b^{0} & \mathbf{0}_{B,N}
    \end{bmatrix}}_{\mathbf{P}_b} \Biggr)\notag\\
    &=\mathbf{\tilde{s}}^{(F)T}(\mathbf{\tilde{T}}_a\mathbf{P}_a-\mathbf{\tilde{T}}_b\mathbf{P}_b) \notag
\end{align}
where, for $g\in\mathcal{A}$, $\mathbf{\tilde{T}}_g$ can be directly written as a function of $\mathbf{T}_g$:
\begin{equation}
    \mathbf{\tilde{T}}_g = \underbrace{\begin{bmatrix}\mathbf{1}_{2N, N} &  \mathbf{0}_{2N, N}\\ \mathbf{0}_{2N, N} & \mathbf{1}_{2N, N}\end{bmatrix}}_{\tilde{\mathbf{M}}}\circ \Biggl( \underbrace{\begin{bmatrix} \mathbf{I}_{2N,2N} \\ \mathbf{I}_{2N,2N} \end{bmatrix}}_{\mathbf{\tilde{I}}_2} \mathbf{T}_g \underbrace{\begin{bmatrix} \mathbf{I}_{N,N} & \mathbf{I}_{N,N}\end{bmatrix}}_{\mathbf{\tilde{I}}_1^{T}}\Biggl). \notag
\end{equation}
Still, each of the first four elements of $|f(\mathbf{T}_a,\mathbf{T}_b)|$ captures the degree to which a particular group fairness notion is violated in the transformed space, while the remaining $B$ terms enforce individual fairness. Setting $\mathbf{F}=[\epsilon_{DP}, \ \epsilon_{PE}, \ \epsilon_{EOp}, \ \epsilon_{EOd}, \ \mathbf{1}_{B}^T\epsilon_{IF}]$, preserves the group and individual fairness constraints. 

Again, we can reformulate the $Fairness$ constraint as an equality constraint of the form:
$\max(\tilde{f}(\mathbf{T}_a,\mathbf{T}_b)-\mathbf{\tilde{F}}, \mathbf{0}_{1,2B+8})=\mathbf{0}_{1,2B+8}$, where 
\begin{align}
\tilde{f}(\mathbf{T}_a,\mathbf{T}_b) = \mathbf{\tilde{s}}^{(F)T}(\mathbf{\tilde{T}_a}\underbrace{\begin{bmatrix}\mathbf{P}_a & -\mathbf{P}_a\end{bmatrix}}_{\tilde{\mathbf{P}}_a}-\mathbf{\tilde{T}_b}\underbrace{\begin{bmatrix}\mathbf{P}_b & -\mathbf{P}_b\end{bmatrix}}_{\tilde{\mathbf{P}}_b})
\ \ \ \ \ \ \ 
\text{and}
\ \ \ \ \ \ \ 
\mathbf{\tilde{F}} = \begin{bmatrix}\mathbf{F} & \mathbf{F}\end{bmatrix} \notag
\end{align}

The final Augmented Lagrangian can be formed as:
\begin{align}
\max_{\bm{\rho}}\min_{\mathbf{T}_a,\mathbf{T}_b\in\mathcal{P}} 
    &-\lambda\underbrace{\sum_{g\in\mathcal{A}}(\mathbf{s}^{(F)T}\mathbf{T}_g\mathbf{p}^{1,g} + (\mathbf{1}_{N}-\mathbf{s}^{(F)})^T\mathbf{T}_g\mathbf{p}^{0,g})}_{Accuracy} + \beta\underbrace{\|\mathbf{T}_a\mathbf{p}_a-\mathbf{T_b}\mathbf{p}_b\|_1}_{Correlation} \notag\\
    & + \underbrace{\langle\bm{\rho},\max(\tilde{f}(\mathbf{T}_a,\mathbf{T}_b)-\mathbf{\tilde{F}}, \mathbf{0}_{1,2B+8})\rangle}_{Augmentation_1} \notag\\
    &+ \frac{\tau}{2}\underbrace{\|\max(\tilde{f}(\mathbf{T}_a,\mathbf{T}_b)-\mathbf{\tilde{F}}, \mathbf{0}_{1,2B+8})\|_{F}^2}_{Augmentation_2}
\end{align}
Minimization problem (32) is convex according to Claim \ref{claim2}. Solving this minimization problem is equivalent to solving minimization problem (31) and can be done by applying the alternating direction method of multipliers (ADMM) to optimize for $\mathbf{T}_a$ and $\mathbf{T}_b$ until convergence using the following updates.
\begin{align}
&\mathbf{T}_a^{k+1}=\argmin_{\mathbf{T}_a\in\mathcal{P}}L(\mathbf{T}_a,\mathbf{T}_b^{k}, \bm{\rho}^{k})\notag\\
&\mathbf{T}_b^{k+1}=\argmin_{\mathbf{T}_b\in\mathcal{P}}L(\mathbf{T}_a^{k+1},\mathbf{T}_b,\bm{\rho}^{k})\notag\\
&\bm{\rho}^{k+1}=\bm{\rho}^{k}+\tau \max(\tilde{f}(\mathbf{T}_a^{k+1},\mathbf{T}_b^{k+1})-\mathbf{\tilde{F}}, \mathbf{0}_{1,2B+8}).\notag
\end{align}

\begin{claim}
\label{claim2} Minimization problem (32) is jointly convex in $\mathbf{T}_a$ and $\mathbf{T}_b$
\end{claim}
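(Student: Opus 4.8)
The plan is to mirror, almost verbatim, the convexity proof for minimization problem~$(9)$ in Appendix~\ref{claim1}; the only genuinely new point is that convexity must now be verified \emph{jointly} in the pair $(\mathbf{T}_a,\mathbf{T}_b)$ over the domain $\mathcal{P}\times\mathcal{P}$, which is convex as a Cartesian product of convex sets. Since a sum of jointly convex functions is jointly convex, it suffices to show that each of the four summands of the objective in minimization problem~$(32)$ --- the accuracy term, the correlation term, and the two augmentation penalty terms --- is jointly convex in $(\mathbf{T}_a,\mathbf{T}_b)$.

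First I would dispose of the two elementary terms. The accuracy term $-\lambda\sum_{g\in\mathcal{A}}\bigl(\mathbf{s}^{(F)T}\mathbf{T}_g\mathbf{p}^{1,g}+(\mathbf{1}_{N}-\mathbf{s}^{(F)})^T\mathbf{T}_g\mathbf{p}^{0,g}\bigr)$ is the sum of a linear function of $\mathbf{T}_a$ and a linear function of $\mathbf{T}_b$, hence jointly linear and a fortiori convex. For the correlation term, the map $(\mathbf{T}_a,\mathbf{T}_b)\mapsto\mathbf{T}_a\mathbf{p}_a-\mathbf{T}_b\mathbf{p}_b$ is linear, so $\beta\|\mathbf{T}_a\mathbf{p}_a-\mathbf{T}_b\mathbf{p}_b\|_1$ is convex as the composition of the convex function $\beta\|\cdot\|_1$ with an affine map. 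The structural fact that makes this go through is that $\mathbf{T}_a$ and $\mathbf{T}_b$ enter the problem only \emph{additively} --- nowhere is there a bilinear coupling of the form $\mathbf{T}_a\mathbf{T}_b$ --- so the single-matrix reasoning applies without change.

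For the two augmentation terms the key step is to observe that $\tilde{f}(\mathbf{T}_a,\mathbf{T}_b)=\mathbf{\tilde{s}}^{(F)T}\mathbf{\tilde{T}}_a\tilde{\mathbf{P}}_a-\mathbf{\tilde{s}}^{(F)T}\mathbf{\tilde{T}}_b\tilde{\mathbf{P}}_b$ is jointly linear in $(\mathbf{T}_a,\mathbf{T}_b)$: each lifted matrix $\mathbf{\tilde{T}}_g=\tilde{\mathbf{M}}\circ(\mathbf{\tilde{I}}_2\mathbf{T}_g\mathbf{\tilde{I}}_1^T)$ is obtained from $\mathbf{T}_g$ by constant left and right multiplications followed by a Hadamard product with the constant mask $\tilde{\mathbf{M}}$, all of which are linear operations, and the two pieces combine additively. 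Granting this, I would replicate the chain of (in)equalities from Appendix~\ref{claim1} with $\mathbf{T}$ replaced by the pair: for $\theta\in[0,1]$, linearity of $\tilde{f}$, the identity $\theta\mathbf{\tilde{F}}+(1-\theta)\mathbf{\tilde{F}}=\mathbf{\tilde{F}}$, and elementwise convexity of the pointwise maximum give that $g(\mathbf{T}_a,\mathbf{T}_b):=\max\bigl(\tilde{f}(\mathbf{T}_a,\mathbf{T}_b)-\mathbf{\tilde{F}},\,\mathbf{0}_{1,2B+8}\bigr)$ is elementwise jointly convex. Consequently $\langle\bm{\rho},g(\mathbf{T}_a,\mathbf{T}_b)\rangle$ is a nonnegative linear combination of convex functions --- nonnegative because the multiplier iterate stays in the nonnegative orthant under the update $\bm{\rho}^{k+1}=\bm{\rho}^{k}+\tau\max(\cdot,\mathbf{0})$ --- hence convex, and $\tfrac{\tau}{2}\|g(\mathbf{T}_a,\mathbf{T}_b)\|_F^2$ is convex because $g$ is valued in the nonnegative orthant, on which $\|\cdot\|_F^2$ is convex and componentwise nondecreasing, so this term is a nondecreasing convex function composed with an (elementwise) convex map. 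Summing the four convex contributions yields joint convexity.

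The main obstacle is purely the bookkeeping behind the joint-linearity claims just invoked: one must check that the enlarged block--mask structure of $\mathbf{\tilde{T}}_a$ and $\mathbf{\tilde{T}}_b$ still depends \emph{linearly} on the underlying $\mathbf{T}_g$, and --- crucially --- that $\mathbf{T}_a$ and $\mathbf{T}_b$ never multiply one another anywhere in minimization problem~$(32)$. Once that is verified, the argument is word-for-word that of Appendix~\ref{claim1} under the substitution $\mathbf{T}\leftrightarrow(\mathbf{T}_a,\mathbf{T}_b)$.
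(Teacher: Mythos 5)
Your proposal is correct and follows essentially the same route as the paper's own proof: term-by-term joint convexity via linearity of the accuracy term, composition of $\|\cdot\|_1$ with an affine map for the correlation term, elementwise joint convexity of the clipped constraint function for the first augmentation term, and monotone convex composition on the nonnegative orthant for the second. If anything, you are slightly more careful than the paper in flagging that convexity of $\langle\bm{\rho}, g(\mathbf{T}_a,\mathbf{T}_b)\rangle$ relies on $\bm{\rho}$ remaining in the nonnegative orthant under the multiplier update, a point the paper leaves implicit.
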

\begin{proof}
    Observing that the sum of jointly convex functions is jointly convex, it suffices to prove that this minimization problem is jointly convex if every term in the problem is jointly convex in $\mathbf{T}_a$ and $\mathbf{T}_b$. Thus, we will show that each term in the problem is jointly convex in $\mathbf{T}_a$ and $\mathbf{T}_b$. Consider $\mathbf{T}_{1,a},\mathbf{T}_{2,a},\mathbf{T}_{1,b},\mathbf{T}_{2,b}\in\mathcal{P}$ and $\theta \in [0,1]$.

    \underline{Joint convexity of the $Accuracy$ term:}\\
    Let $f(\mathbf{T}_a,\mathbf{T}_b)$ represent the $Accuracy$ term in minimization problem $(32)$.
    \begin{align}
        &f(\theta\mathbf{T}_{1,a}+(1-\theta)\mathbf{T}_{2,a},\theta\mathbf{T}_{1,b}+(1-\theta)\mathbf{T}_{2,b}) \notag\\
        &=\sum_{g\in\mathcal{A}}(\mathbf{s}^{(F)T}(\theta\mathbf{T}_{1,g}+(1-\theta)\mathbf{T}_{2,g})\mathbf{p}^{1,g} + (\mathbf{1}_{N}-\mathbf{s}^{(F)})^T(\theta\mathbf{T}_{1,g}+(1-\theta)\mathbf{T}_{2,g})\mathbf{p}^{0,g})\notag\\
        &=\theta\sum_{g\in\mathcal{A}}\{\mathbf{s}^{(F)T}\mathbf{T}_{1,g}\mathbf{p}^{1,g} + (\mathbf{1}_{N}-\mathbf{s}^{(F)})^T\mathbf{T}_{1,g}\mathbf{p}^{0,g}\} \notag\\
        & \ \ \ \ \ \ \ \ \ \ \ \ \ \ \ + (1-\theta)\sum_{g\in\mathcal{A}}\{\mathbf{s}^{(F)T}\mathbf{T}_{2,g}\mathbf{p}^{2,g} + (\mathbf{1}_{N}-\mathbf{s}^{(F)})^T\mathbf{T}_{2,g}\mathbf{p}^{0,g}\}\notag\\
        &= \theta f(\mathbf{T}_{1,a},\mathbf{T}_{1,b}) + (1-\theta) f(\mathbf{T}_{2,a},\mathbf{T}_{2,b})\notag
    \end{align}
    
    \underline{Joint convexity of the $Correlation$ term:}\\
    Let $f(\mathbf{T}_a,\mathbf{T}_b)$ represent the $Correlation$ term in minimization problem $(32)$.
    \begin{align}
        &f(\theta\mathbf{T}_{1,a}+(1-\theta)\mathbf{T}_{2,a},\theta\mathbf{T}_{1,b}+(1-\theta)\mathbf{T}_{2,b})\notag\\
        &=\|(\theta\mathbf{T}_{1,a}+(1-\theta)\mathbf{T}_{2,a})\mathbf{p}_a-(\theta\mathbf{T}_{1,b}+(1-\theta)\mathbf{T}_{2,b})\mathbf{p}_b\|_1\notag\\
        &=\|\theta(\mathbf{T}_{1,a}\mathbf{p}_a-\mathbf{T}_{1,b}\mathbf{p}_b)+(1-\theta)(\mathbf{T}_{2,a}\mathbf{p}_a-\mathbf{T}_{2,b}\mathbf{p}_b)\|_1\notag\\
        &\underbrace{\leq}_{\substack{\text{Triangle}\\\text{Inequality}}}\|\theta(\mathbf{T}_{1,a}\mathbf{p}_a-\mathbf{T}_{1,b}\mathbf{p}_b)\|_1+\|(1-\theta)(\mathbf{T}_{2,a}\mathbf{p}_a-\mathbf{T}_{2,b}\mathbf{p}_b)\|_1\notag\\
        &=\theta\|\mathbf{T}_{1,a}\mathbf{p}_a-\mathbf{T}_{1,b}\mathbf{p}_b\|_1+(1-\theta)\|\mathbf{T}_{2,a}\mathbf{p}_a-\mathbf{T}_{2,b}\mathbf{p}_b\|_1\notag\\
        &=\theta f (\mathbf{T}_{1,a},\mathbf{T}_{1,b}) + (1-\theta) f(\mathbf{T}_{2,a},\mathbf{T}_{2,b})\notag
    \end{align}
    
    \underline{Joint convexity of the $Augmentation_1$ term:}

    We will show that every element of $\tilde{g}(\mathbf{T}_a,\mathbf{T}_b)=\max(\tilde{f}(\mathbf{T}_a,\mathbf{T}_b)-\mathbf{\tilde{F}}, \mathbf{0}_{1,2B+8})$ is jointly convex in $\mathbf{T}_a$ and $\mathbf{T}_b$. It directly follows from this that the $Augmentation_1$ term is jointly convex in $\mathbf{T}_a$ and $\mathbf{T}_b$ since the inner product is a linear combination of jointly convex functions.
    \begin{align}
        &\tilde{g}(\theta\mathbf{T}_{1,a}+(1-\theta)\mathbf{T}_{2,a},\theta\mathbf{T}_{1,b}+(1-\theta)\mathbf{T}_{2,b})\notag\\
        &=\max(\tilde{f}(\theta\mathbf{T}_{1,a}+(1-\theta)\mathbf{T}_{2,a},\theta\mathbf{T}_{1,b}+(1-\theta)\mathbf{T}_{2,b})-\mathbf{\tilde{F}}, \mathbf{0}_{1,2B+8})\notag\\
        &=\max(\hat{\mathbf{s}}^{(F)}\big[\tilde{\mathbf{M}}\circ(\tilde{\mathbf{I}}_2(\theta\mathbf{T}_{1,a}+(1-\theta)\mathbf{T}_{2,a})\tilde{\mathbf{I}}_1^T)\big]\tilde{\mathbf{P}}_a \notag\\
        & \ \ \ \ \ \ \ \ \ \ \ \ \ \ \ -\hat{\mathbf{s}}^{(F)}\big[\tilde{\mathbf{M}}\circ(\tilde{\mathbf{I}}_2(\theta\mathbf{T}_{1,b}+(1-\theta)\mathbf{T}_{2,b})\tilde{\mathbf{I}}_1^T)\big]\tilde{\mathbf{P}}_b-\mathbf{\tilde{F}}, \mathbf{0}_{1,2B+8})\notag\\
        &=\max(\theta\{\hat{\mathbf{s}}^{(F)}\big[\tilde{\mathbf{M}}\circ(\tilde{\mathbf{I}}_2\mathbf{T}_{1,a}\tilde{\mathbf{I}}_1^T)\tilde{\mathbf{P}}_a - \tilde{\mathbf{M}}\circ(\tilde{\mathbf{I}}_2\theta\mathbf{T}_{1,b}\tilde{\mathbf{I}}_1^T)\big]\tilde{\mathbf{P}}_b-\mathbf{\tilde{F}}\}\notag\\
        & \ \ \ \ \ \ \ \ \ \ \ \ \ \ \ + (1-\theta)\{\hat{\mathbf{s}}^{(F)}\big[\tilde{\mathbf{M}}\circ(\tilde{\mathbf{I}}_2\mathbf{T}_{2,a}\tilde{\mathbf{I}}_1^T)\tilde{\mathbf{P}}_a - \tilde{\mathbf{M}}\circ(\tilde{\mathbf{I}}_2\theta\mathbf{T}_{2,b}\tilde{\mathbf{I}}_1^T)\big]\tilde{\mathbf{P}}_b-\mathbf{\tilde{F}}\}, \mathbf{0}_{1,2B+8})\notag\\
        &\underbrace{\leq}_{\text{\makebox[0pt]{Elementwise}}}\theta\max(\hat{\mathbf{s}}^{(F)}\big[\tilde{\mathbf{M}}\circ(\tilde{\mathbf{I}}_2\mathbf{T}_{1,a}\tilde{\mathbf{I}}_1^T)\tilde{\mathbf{P}}_a - \tilde{\mathbf{M}}\circ(\tilde{\mathbf{I}}_2\theta\mathbf{T}_{1,b}\tilde{\mathbf{I}}_1^T)\big]\tilde{\mathbf{P}}_b-\mathbf{\tilde{F}},\mathbf{0}_{1,2B+8})\notag\\
        & \ \ \ \ \ \ \ \ \ \ \ \ \ \ \ +(1-\theta)\max(\hat{\mathbf{s}}^{(F)}\big[\tilde{\mathbf{M}}\circ(\tilde{\mathbf{I}}_2\mathbf{T}_{2,a}\tilde{\mathbf{I}}_1^T)\tilde{\mathbf{P}}_a - \tilde{\mathbf{M}}\circ(\tilde{\mathbf{I}}_2\theta\mathbf{T}_{2,b}\tilde{\mathbf{I}}_1^T)\big]\tilde{\mathbf{P}}_b-\mathbf{\tilde{F}}, \mathbf{0}_{1,2B+8})\notag\\
        &=\theta\max( \tilde{f}(\mathbf{T}_{1,a},\mathbf{T}_{1,b}),\mathbf{0}_{1,2B+8}) +(1-\theta)\max(\tilde{f}(\mathbf{T}_{2,a},\mathbf{T}_{2,b}), \mathbf{0}_{1,2B+8})\notag\\
        &=\theta \tilde{g}(\mathbf{T}_{1,a},\mathbf{T}_{1,b}) +(1-\theta)\tilde{g}(\mathbf{T}_{2,a},\mathbf{T}_{2,b}) \notag
    \end{align}
    
    \underline{Joint convexity of the $Augmentation_2$ term:}
    
    The convexity of this term follows from the fact that $\tilde{g}(\mathbf{T}_a,\mathbf{T}_b)$ is a non-negative jointly convex function in $\mathbf{T}_a$ and $\mathbf{T}_b$ and the norm is convex and non-decreasing over the set $\mathbb{R}_{\geq 0}$.
\end{proof}

\section{Experimental Details}
\label{exp_details}
Our experiments are conducted on three benchmark tabular datasets with known biases in the sensitive attributes; namely, the Adult \cite{kohavi1996scaling}, Law \cite{wightman1998lsac}, and Dutch Census \cite{van20002001} datasets. We treat race as the sensitive attribute for the Law dataset and sex as the sensitive attribute in the Adult and Dutch Census datasets. All codes were written in Python using version 3.8. In the remainder of this section, we provide the experimental details used in each of the three core modules of our analysis.

\subsection{Generator and Vector Quantization Implementation} 
To model the population distributions of each dataset, we use Conditional Tabular Generative Adversarial Network (CT-GAN) \cite{xu2019modeling}, which is the state-of-the-art for generating mixed continuous and discrete tabular data. For each dataset we train a generator for 400 epochs using the publically available code provided by ~\cite{xu2019modeling} \footnote{https://github.com/sdv-dev/CTGAN}. Each generator is then used to produce one million samples for each dataset, on which vector quantization is applied using the Linde, Buzo, Gray (LBG) algorithm, which is a multi-dimensional generalization of the Lloyd-Max algorithm for vector quantization. To produce a cell decomposition of size $N$, this iterative algorithm works by specifying an initial set of $N$ centroids. In each iteration, each centroid is updated by taking the average of all samples that are closer to it than any other centroid. This iterative process continues until convergence, specified by a relative error tolerance. We use a publicly available implementation of this algorithm \footnote{https://github.com/internaut/py-lbg}, specifying a relative error of 0.01 as the stopping criterion for optimizing the VQ cell partitioning. We set  $N=256$ for analyzing the population distribution of each dataset in the body of the main paper.

\subsection{Solving Minimization Problem (4) and (30)} 
The linear programs used to perform the fairness-accuracy trade-off analysis in minimization problems (4) and (30) were implemented using the linprog tool from the Scipy Optimize library. For each of the datasets, we normalize the feature vectors for performing the distance calculations associated with local individual fairness. Since each of the analyzed datasets are composed of a mixture of discete and continuous features, we use the Hamming distance to calculate the elementwise distances between discrete entries of a feature vector and the absolute distance to measure the distances between continuous features. All continuous features are normalized to have zero mean and $\frac{1}{2}$ variance so that the maximum distance of each entry is approximately $1$ \cite{wilson1997improved}. This was done to ensure that each element of a feature vector has approximately equal contribution to the final distance. The average of these distances is taken as the final distance. For all experiments involving local individual fairness constraints, the following procedure was used for choosing $\eta.$ The distances between each pair of cell centroids in our discretized population distribution were calculated. Of this set of distances, the $n^{\text{th}}$--percentile was calculated. All pairs of distances smaller than this percentile represent local neighboring cell centroids. Empirically, we set $n=3.5$ percentiles to not include too few or too many neighbors in a cell's local neighborhood. Furthermore, we set the parameter  $\theta=1$ in the exponential of the $(Ind)$ constraints for both minimization problems.

\subsection{Solving Minimization Problem (9) and (32)} 
The implementation of the method of multipliers used for solving minimization problem (9) was performed using Tensorflow, version 2.8. To perform the updates specified by equation (10) to $\mathbf{T}$ in each iteration, we perform gradient descent with a momentum of 0.9 and a decaying learning rate from 1e-2 to 1e-12. The entire minimization process is terminated once the sum of square residuals for $\bm{\rho}$ and $\mathbf{T}$ falls below 1e-4.

A similar process is used to solve minimization problem (32) with the following modifications. In each iteration we sequentially minimize $\mathbf{T}_a$ and $\mathbf{T}_b$ (using the same parameters specified above). Furthermore, the entire minimization process is halted once the sum of square residuals for $\bm{\rho}$, $\mathbf{T}_a$, and  $\mathbf{T}_b$ falls below 1e-4.

\section{Full Set of Pareto Frontiers for Sensitive-Unaware}
\label{extra_unawareness_pareto}
Since spacing limitations prevented us from including more plots in Fig. \ref{Acc_fair_trade} of the main paper, we provide the remaining Pareto frontiers under unawareness of the sensitive attribute that could not be fit into that figure in Fig. \ref{remaining_pareto_fig} below. As was the case for the sensitive-attribute-aware plots in Fig. \ref{Acc_fair_trade}, for the Adult and Law datasets, we see much smaller accuracy dropoffs in pairings involving DP as opposed to pairings involving EA. The converse is true for the Dutch Census dataset, further highlighting the distributional dependence of the tensions that exist among these fairness notions.

\begin{figure}
\centerline{\includegraphics[width=\columnwidth]{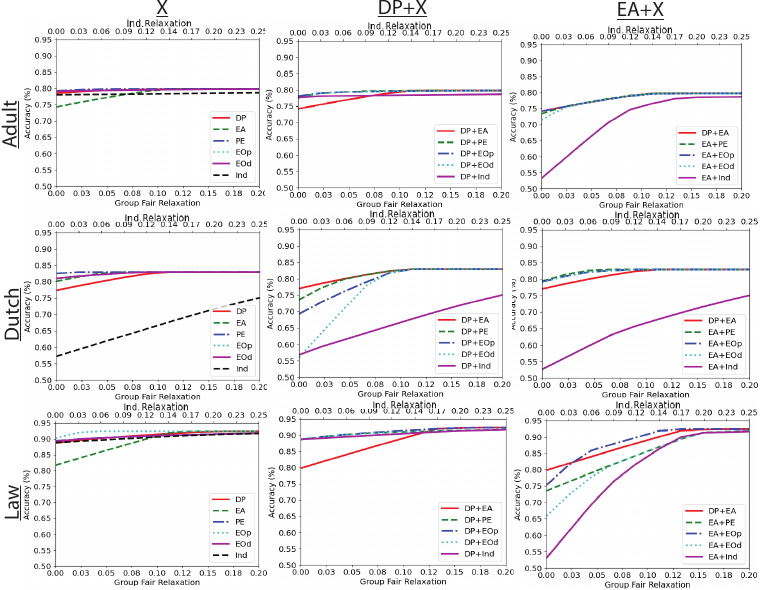}}
\caption{Pareto frontiers capturing the accuracy-fairness trade-off for three datesets under unawareness of the sensitive attribute.  Each plot provides curves for different pairings of fairness constraints; namely, DP, EA, PE, EOd, and Ind.} 
\label{remaining_pareto_fig}
\end{figure}

\begin{figure}
\centerline{\includegraphics[width=0.99\columnwidth]{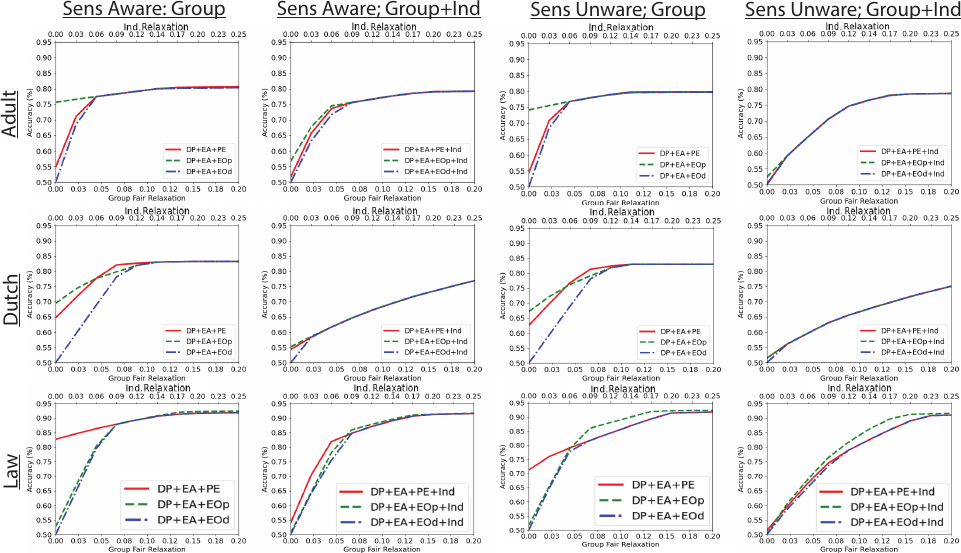}}
\caption{Pareto frontiers capturing the accuracy-fairness trade-off for three datesets under awareness (columns 1 and 2) and unawareness (columns 3 and 4) of the sensitive attribute. Plots include pairs of three group fairness definitions with (columns 2 and 4) and without (columns 1 and 3) individual fairness constraints.}
\label{three_pareto_fig}
\end{figure}

All Pareto frontiers discussed up to this point deal with no more than two combinations of fairness notions. In Fig.~\ref{three_pareto_fig} we provide plots containing three or more combinations of fairness definitions. It can clearly be seen that the accuracy droppoffs in these plots are much more drastic than those from Fig.~\ref{Acc_fair_trade} or ~\ref{remaining_pareto_fig}, as is to be expected. Noteably, enforcing local individual fairness on top of any combination of three group fairness prohibits any meaningful classification result with all accuracies dropping to around 50\% with the inclusion of the Ind constraint inside this figure.

\section{VQ Granularity vs Time Complexity}
\label{VQ_time}

Given that the generator has produced a sufficient number of samples for modeling the population distribution, fine-grain partitioning the space of feature vectors is desirable for precise analysis since increasing the number of VQ cells reduces their size, leading the average information associated with a cell centroid to be more representative of all the feature vectors within a cell. However, there is a trade-off between the number of VQ cells and optimization time complexity.  We selected 256 as the number of VQ cells for our analysis to strike a balance between precision and time complexity. In Table \ref{VQ_size_tab}, we report the average accuracy and standard deviation of the Pareto frontiers for different combinations of group fairness constraints for relaxations running from 0 to 0.2. The results show that significantly increasing the number of VQ cells from 256 to 512 cells only leads to marginal changes in the accuracy and standard deviation of the Pareto frontiers for each dataset, indicating that the trends have begun to plateau. These results remain consistent across datasets. Hence, 256 cells provides a faithful representation of the fairness-accuracy trade-off.

In Table \ref{time_complexity_tab}, we report the time complexities associated with the different modules of our analysis. All experiments were run on a Macbook Pro (1.7 GHz Quad-Core Intel Core i7) with no GPU support. Training the codebook for the 256 VQ cell decomposition is the most expensive of all the tasks that we preformed, but the time complexities are reasonable, especially since training only needs to be performed once for each dataset. The results for the four remaining tasks in the table consist of averages and standard deviations of the computational complexities for solving each minimization problem in our optimization framework for all relaxations of each group and individual fairness combination reported in Figures 2 and 5 and Tables 2-4. The linear programming minimization problems are able to be solved extremely efficiently with runtimes all far below one second for processing.  Optimization problems (9) and (32) are much more expensive since a large number of gradient steps must be applied for the solution to converge. The runtimes for these problems tend to be much smaller when only pairs of group fairness constraints are active since each group fairness definition is associated with just one constraint (in constrast to individual fairness). Still, the average runtimes for solving problems (9) and (32) are still typically around 30 minutes, with worst case runtimes typically no worse than one hour and 30 minutes. Using more adaptive learning rate schedules could further improve these runtimes.

\begin{table}[H]
\caption{Average accuracy of group fairness Pareto frontiers with relaxations varied between 0 and 0.2. Standard deviations of the accuracies of each frontier are provided in parentheses.}
\label{VQ_size_tab}
\resizebox{\textwidth}{!}{
\begin{tabular}{ c  c c  c c c c c}
  \hline
  \multicolumn{8}{c}{Adult} \\
 \hline
  & \multicolumn{3}{c}{Awareness} &  & \multicolumn{3}{c}{Unawareness} \\
 \hline
 \# Cells & DP+EA & DP+EOd & EA+EOd & & DP+EA & DP+EOd & EA+EOd\\
 \hline
 16    &  0.734  (0.039) & 0.775  (0.000) &  0.720  (0.059) &  & 0.701    (0.062) & 0.775    (0.000) & 0.699    (0.065)\\
 32   &  0.763  (0.022) &  0.784 (0.004) &  0.750 (0.047) &   &   0.754 (0.029) & 0.781 (0.002) &  0.744 (0.047)\\
 64  &  0.776 (0.019) &  0.786 (0.005) &  0.768 (0.028) &   &  0.763 (0.026) &  0.782 (0.002) &  0.760 (0.032)\\
 128 &  0.778  (0.019) &  0.789 (0.005) &  0.772 (0.026) &   &   0.769 (0.025) &  0.786 (0.004) & 0.766 (0.030)\\
 256  &  0.789 (0.017) &  0.798 (0.008) &  0.785 (0.023) &   & 0.782 (0.020) &  0.793 (0.007) &  0.779 (0.026)\\
 512 &  0.803  (0.015) &  0.808 (0.012) &  0.798 (0.023) &   &   0.798 (0.018)  &  0.805  (0.011) &  0.794 (0.026)\\
 \hline
  \multicolumn{8}{c}{Dutch Census} \\
 \hline
  & \multicolumn{3}{c}{Awareness} &  &  \multicolumn{3}{c}{Unawareness} \\
 \hline
 \# Cells & DP+EA & DP+EOd & EA+EOd & & DP+EA & DP+EOd & EA+EOd\\
 \hline
 16    &  0.688 (0.008) &  0.672 (0.042) & 0.697 (0.007) &   &  0.674  (0.001) &  0.657 (0.035) & 0.667 (0.012)\\
 32   &  0.728  (0.011) &  0.704 (0.057) & 0.735 (0.011) & & 0.719 (0.007) &  0.695 (0.052) & 0.717 (0.010)\\
 64  &  0.791 (0.014) &  0.747 (0.082) & 0.797 (0.012) &   &  0.783 (0.014) & 0.741 (0.078) &  0.784(0.011)\\
 128 &  0.796  (0.015) &  0.752 (0.084) & 0.801 (0.015) &   & 0.790 (0.014) & 0.746 (0.081) & 0.787 (0.016)\\
 256  &  0.821 (0.013) &  0.766 (0.093) &  0.826 (0.010) &   &  0.814  (0.020) &  0.765 (0.093) & 0.822 (0.012)\\
 512 &  0.839  (0.014) &  0.778 (0.101) & 0.848 (0.007) &   &  0.829  (0.024) & 0.777 (0.100) &  0.846 (0.008)\\
 \hline
  \multicolumn{8}{c}{Law} \\
 \hline
  & \multicolumn{3}{c}{Awareness} &  &  \multicolumn{3}{c}{Unawareness} \\
 \hline
 \# Cells & DP+EA & DP+EOd & EA+EOd & & DP+EA & DP+EOd & EA+EOd\\
 \hline
 16    &  0.823  (0.049) &  0.891 (0.002) & 0.821 (0.052)  &    & 0.803 (0.063) &  0.890  (0.002) &  0.765  (0.100)\\
 32   &  0.841  (0.046) &  0.894 (0.004) & 0.839 (0.048) &    &  0.821  (0.060) &  0.891 (0.003) &  0.791  (0.085)\\
 64  &   0.844 (0.045) &  0.896 (0.004) & 0.845 (0.046) &   &  0.823 (0.059) & 0.894(0.004) &  0.796  (0.070)\\
 128 &  0.849 (0.045) &  0.899 (0.006) &  0.853 (0.045) &   &  0.832  (0.057) &  0.897  (0.005) &  0.819  (0.072)\\
 256  & 0.891 (0.032) & 0.910  (0.010) &  0.876 (0.055) &   &  0.879 (0.045) &  0.907 (0.009) &  0.833 (0.085)\\
 512 &  0.902  (0.031) &  0.916 (0.013) &  0.892  (0.050) &   &   0.888 (0.044) & 0.914 (0.012) &  0.855 (0.077)\\
 \hline
\end{tabular}}
\end{table}




\begin{table}[H]
\centering
\caption{Average time complexity of the different optimization tasks in the proposed framework for each dataset.}
\label{time_complexity_tab}
\begin{tabular}{ c c c c }
 \hline
  Task& Adult & Dutch Census & Law\\
 \hline
 256 VQ Cell Training    & \ \ 5375s \ \ \ \ (-----) \ & \ \ 8513s \ \ \ \ (-----) \ & \ \ 8417s \ \ \ \ (-----) \\
 Solving (4)   & \ 0.007s \ (0.007) \ & \ 0.007s \ (0.007) \ &  \ 0.007s \ (0.008)  \\
 Solving (9)  & \ \ \ \ 839s \ \ \ \ (525) \ & \ \ 1348s \ \ \ \ (941) \ & \ \ 1644s \ \ (1823)  \\
 Solving (30)  & \ 0.033s \ (0.009) \ & \ 0.043s \ (0.008) \ & \ 0.035s \ (0.009) \\
 Solving (32) & \ \ 2773s  \ \ (2327) \ & \ \ 2104s \ \ (1855) \ & \ \ 1785s \ \ (1743)  \\
 \hline
\end{tabular}
\end{table}

\newpage
\section{Summary of Notation}
\label{notation_summary}
\begin{table}[h]\caption{Notation}
\begin{center}
\begin{tabular}{r c p{10cm} }
\toprule
bold face capital letter (e.g. $\mathbf{X}$) & $\triangleq$ & Matrix\\
bold face lowercase letter (e.g. $\mathbf{x}$) & $\triangleq$ & Column vector\\
$\mathbf{X}[i,j]$& $\triangleq$ & Element in $i^{th}$ row and $j^{th}$ column of $\mathbf{X}$\\
$\mathbf{x}[i]$& $\triangleq$ & $i^{th}$ element of $\mathbf{x}$\\
$\mathbf{T}$ & $\triangleq$ & Matrix representing decorrelation mapping for feature vectors\\
$\mathbf{T}_a$ & $\triangleq$ & Matrix representing decorrelation mapping for feature vectors specific to Group $a$\\
$\mathbf{T}_b$ & $\triangleq$ & Matrix representing decorrelation mapping for feature vectors specific to Group $b$\\
D &$\triangleq$ & Dataset\\
G &$\triangleq$ & Sample generator\\
$A$ & $\triangleq$ & Sensitive attribute random variable\\
$Y$ & $\triangleq$ & Class label random variable\\
$X$ & $\triangleq$ & Feature vector random variable\\
$\mathcal{X},\mathcal{A},\mathcal{Y}$ & $\triangleq$ & Sample spaces for the feature vector, sensitive attribute, and class label random vector/variables\\
$S$ & $\triangleq$ & Randomized Scoring function \\
$\hat{Y}$ & $\triangleq$ & Class label estimator\\
$\mathbf{s}^{*}$ & $\triangleq$ & Scores produced by unconstrained Bayesian classifiers \\
$\mathbf{s}^{(F)}$ & $\triangleq$ & Scores produced by fair Bayesian classifiers\\
$\mathbf{m}$ & $\triangleq$ & deviation between scores produced by unconstrained and fair Bayesian classifiers (i.e. $\mathbf{s}^{*}$ and $\mathbf{s}^{(F)}$). We optimize for this vector.\\
$\mathbf{1}_{N}$ & $\triangleq$ & column vectors of length $N$ containing all $1$s \\
$\mathbf{0}_{N}$ & $\triangleq$ & column vectors of length $N$ containing all $0$s\\
$\mathbf{I}_{M,M}$ & $\triangleq$ & $M\times M$ identity matrix\\ $\mathbf{0}_{M,N}$ & $\triangleq$ & $M$ and $N$ matrix of all zeros\\
$\mathbf{1}_{M,N}$ & $\triangleq$ & $M$ and $N$ matrix of all ones\\
$\mathbf{p}$ & $\triangleq$ & Vector capturing distribution involving $X$ in which $X$ \textit{is not} a variable on which we condition. E.g. The $i^{th}$ element of $\mathbf{p}_0^a$ is equal to $P(X=\mathbf{x}_i,A=a|Y=0)$\\
\toprule
\end{tabular}
\end{center}
\label{tab:TableOfNotationForMyResearch}
\end{table}

\end{document}